\title{Multi-Fidelity Bayesian Optimization via Deep Neural Networks}
\author{%
	Shibo Li \\
	School of Computing\\
	University of Utah\\
	Salt Lake City, UT 84112 \\
	\texttt{shibo@cs.utah.edu} \\
	\And
	Wei Xing \\
	Scientific Computing and Imaging Institute\\
	University of Utah \\
	Salt Lake City, UT 84112 \\
	\texttt{wxing@sci.utah.edu} \\
	\And
	Robert M. Kirby\\
	School of Computing\\
	University of Utah\\
	Salt Lake City, UT 84112 \\
	\texttt{kirby@cs.utah.edu} \\
	\And
	Shandian Zhe\\
	School of Computing\\
	University of Utah\\
	Salt Lake City, UT 84112 \\
	\texttt{zhe@cs.utah.edu} \\
}
\newcommand{\alanc}[1]{}
\newtheorem{theorem}{Theorem}[section]
\newtheorem{lem}[theorem]{Lemma}
\newcommand{\ours}{DNN-MFBO\xspace}
\renewcommand{\d}{{\rm d}}  
\newcommand{\m}{{\bf m}}
\newcommand{\s}{{\bf s}}
\newcommand{\w}{{\bf w}}
\newcommand{\x}{{\bf x}}
\newcommand{\y}{{\bf y}}
\newcommand{\I}{{\bf I}}
\newcommand{\K}{{\bf K}}
\renewcommand{\L}{{\bf L}}
\newcommand{\N}{\mathcal{N}}  
\newcommand{\Dcal}{\mathcal{D}}
\newcommand{\Ocal}{\mathcal{O}}
\newcommand{\Ycal}{\mathcal{Y}}
\newcommand{\Xcal}{\mathcal{X}}
\newcommand{\Fcal}{\mathcal{F}}
\newcommand{\Lcal}{\mathcal{L}}
\newcommand{\Scal}{\mathcal{S}}
\newcommand{\X}{{\bf X}}
\newcommand{\Wcal}{{\mathcal{W}}}
\newcommand{\Ucal}{{\mathcal{U}}}
\newcommand{\bphi}{\boldsymbol{\phi}}
\newcommand{\bepi}{{\boldsymbol{\epsilon}}}
\newcommand{\bomega}{\boldsymbol{\omega}}
\newcommand{\btheta}{\boldsymbol{\theta}}
\newcommand{\bSigma}{\boldsymbol{\Sigma}}
\newcommand{\bmu}{\boldsymbol{\mu}}
\newcommand{\0}{{\bf 0}}
\newcommand{\ben}{\begin{enumerate}}
\newcommand{\een}{\end{enumerate}}
\newcommand{\argmax}{\operatornamewithlimits{argmax}}
\newcommand{\ie}{{\textit{i.e.,}}\xspace}
\newcommand{\eg}{{\textit{e.g.,}}\xspace}
\newcommand{\EE}{\mathbb{E}}
\newcommand{\cmt}[1]{}
\newcommand{\whalpha}{\widehat{\alpha}}
\newcommand{\wheta}{\widehat{\eta}}
\newcommand{\appropto}{\mathrel{\vcenter{
			\offinterlineskip\halign{\hfil$##$\cr
				\propto\cr\noalign{\kern2pt}\sim\cr\noalign{\kern-2pt}}}}}
\begin{document}

\maketitle

\begin{abstract}
	Bayesian optimization (BO) is a popular framework for optimizing black-box functions. In many applications,  the objective function can be evaluated at multiple fidelities to enable a trade-off between the cost and accuracy. To reduce the optimization cost, many multi-fidelity BO methods have been proposed. Despite their success, these methods either ignore or over-simplify the strong, complex correlations across the fidelities. While the acquisition function is therefore easy and convenient to calculate,  these methods can be inefficient in estimating the objective function. To address this issue, we propose Deep Neural Network Multi-Fidelity Bayesian Optimization (DNN-MFBO) that can flexibly capture all kinds of complicated relationships between the fidelities to improve the objective function estimation and hence the optimization performance. We use sequential,  fidelity-wise Gauss-Hermite quadrature and  moment-matching to compute a mutual information based acquisition function in a tractable and highly efficient way. We show the advantages of our method in both synthetic benchmark datasets and real-world applications in engineering design. 
\end{abstract}		


\section{Introduction}
Bayesian optimization (BO)~\citep{mockus1978application,snoek2012practical} is a general and powerful approach for optimizing black-box functions. It uses a probabilistic surrogate model (typically Gaussian process (GP)~\citep{Rasmussen06GP}) to estimate the objective function. By repeatedly maximizing an acquisition function computed with the information of the surrogate model, BO finds and queries at new input locations that are closer and closer to the optimum; meanwhile the new training examples are incorporated into the surrogate model to improve the objective estimation.

In practice, many applications allow us to query the objective function at different fidelities, where low fidelity queries are cheap yet inaccurate, and high fidelity queries more accurate but costly. For example, in physical simulation~\citep{peherstorfer2018survey}, the computation of an objective (\eg the elasticity of a part or energy of a system) often involves solving partial differential equations. Running a numerical solver with coarse meshes gives a quick yet rough result; using dense meshes substantially improves the accuracy but dramatically increases the computational cost. The multi-fidelity queries enable us to choose a trade-off between the cost and accuracy.

Accordingly, to reduce the optimization cost, many multi-fidelity BO methods~\citep{huang2006sequential,lam2015multifidelity, kandasamy2016gaussian,zhang2017information,takeno2019multi} have been proposed to jointly select the input locations and fidelities to best balance the optimization progress and query cost, \ie the benefit-cost ratio. Despite their success, these methods often ignore the strong, complex correlations between the function outputs at different fidelities, and learn an independent GP for each fidelity~\citep{lam2015multifidelity,kandasamy2016gaussian}. 
 Recent works use multi-output GPs to capture the fidelity correlations. However, to avoid intractable computation of the acquisition function,  they have to impose simplified correlation structures. For example, \citet{takeno2019multi} assume a linear correlation between the fidelities; \citet{zhang2017information} use kernel convolution to construct the cross-covariance function, and have to choose simple, smooth kernels (\eg Gaussian) to ensure a tractable convolution. Therefore, the existing methods can be inefficient and inaccurate in estimating the objective function, which further lowers the optimization efficiency and increases the cost. 
 
To address these issues, we propose \ours, a deep neural network based multi-fidelity Bayesian optimization that is flexible enough to capture all kinds of complex (possibly highly nonlinear and nonstationary) relationships between the fidelities, and exploit these relationships to jointly estimate the objective function in all the fidelities to improve the optimization performance. Specifically, we stack a set of neural networks (NNs) where each NN models one fidelity. In each fidelity, we feed both the original input (to the objective) and output from the previous fidelity into the NN to propagate information throughout and to estimate the complex relationships across the fidelities. Then, the most challenging part is the calculation of the acquisition function. For efficient inference and tractable computation, we consider the NN weights in the output layer as random variables and all the other weights as hyper-parameters. We develop a stochastic variational learning algorithm to jointly estimate the posterior of the random weights and hyper-parameters. Next, we sequentially perform Gauss-Hermite quadrature and moment matching to approximate the posterior and conditional posterior of the output in each fidelity, based on which we calculate and optimize an information based acquisition function, which is not only computationally tractable and efficient, but also conducts maximum entropy search~\citep{wang2017max}, the state-of-the-art criterion in BO. 

For evaluation, we examined \ours in three benchmark functions and two real-world applications in engineering design that requires physical simulations. The results consistently demonstrate that \ours can optimize the objective function (in the highest fidelity) more effectively, meanwhile with smaller query cost, as compared with state-of-the-art multi-fidelity and single fidelity BO algorithms.

\vspace{-0.1in}
\section{Background}
\vspace{-0.1in}
\noindent \textbf{Bayesian optimization}. 
To optimize a black-box objective function $f:\Xcal \rightarrow \mathbb{R}$, BO learns a probabilistic surrogate model to predict the function values across the input domain $\Xcal$ and quantifies the uncertainty of the predictions. This information is used to calculate an acquisition function that measures the utility of querying at different input locations, which usually encodes a exploration-exploitation trade-off. By maximizing the acquisition function, BO finds new input locations at which to query, which are supposed to be closer to the optimum; meanwhile the new examples are added into the training set to improve the accuracy of the surrogate model. 
The most commonly used surrogate model is Gaussian process (GP)~\citep{Rasmussen06GP}. Given the training inputs $\X = [\x_1, \ldots, \x_N]^\top$ and (noisy) outputs $\y = [y_1, \ldots, y_N]^\top$, GP assumes the outputs follow a multivariate Gaussian distribution, $p(\y|\X) = \N(\y|\m, \K + \sigma^2\I)$ where $\m$ are the values of the mean function at the inputs $\X$, $\K$ is a kernel matrix on $\X$, $[\K]_{ij} = k(\x_i, \x_j)$ ($k(\cdot, \cdot)$ is the kernel function), and $\sigma^2$ is the noise variance.  The mean function is usually set to the constant function $0$ and so $\m = \0$. Due to the multi-variate Gaussian form,  given a new input $\x^*$, the posterior distribution of the function output, $p\big(f(\x^*)|\x^*, \X, \y\big)$ is a closed-form conditional Gaussian, and hence is convenient to quantify the uncertainty and calculate the acquisition function.

There are a variety of commonly used acquisition functions, such as expected improvement (EI)~\citep{jones1998efficient}, upper confident bound (UCB)~\citep{srinivas2010gaussian}, entropy search (ES)~\citep{hennig2012entropy}, and predictive entropy search  (PES)~\citep{hernandez2014predictive}. A particularly successful recent addition is the max-value entropy search (MES) ~\citep{wang2017max}, which not only enjoys a global utility measure (like ES and PES), but also is computationally efficient (because it calculates the entropy of the function output rather than input like in ES/PES). Specifically, MES maximizes the mutual information between the function value and its maximum $f^*$ to find the next input at which to query, 
\begin{align}
a(\x) = I\big(f(\x), f^*|\Dcal\big) = H\big(f(\x)|\Dcal\big) - \EE_{p(f^*|\Dcal)}[H\big(f(\x)|f^*, \Dcal\big)], \label{eq:mi}
\end{align}
where $I(\cdot, \cdot)$ is the mutual information, $H(\cdot)$ the entropy, and $\Dcal$ the training examples collected so far.  Note that the function values and extremes are considered as generated from the posterior in the surrogate model, which includes all the knowledge we have for the black-box objective function.

\noindent \textbf{Multi-fidelity Bayesian optimization}. Many applications allow multi-fidelity queries of the objective function, $\{f_1(\x), \ldots, f_M(\x)\}$, where the higher (larger) the fidelity $m$, the more accurate yet costly the query of $f_m(\cdot)$. Many studies have extended BO for multi-fidelity settings. For example, MF-GP-UCB~\citep{kandasamy2016gaussian} starts from the lowest fidelity ($m=1$), and queries the objective at each fidelity until the confidence band exceeds a particular threshold. Despite its effectiveness and theoretical guarantees, MF-GP-UCB learns an independent GP surrogate for each fidelity and ignores the strong correlations between the fidelities. Recent works use a multi-output GP to model the fidelity correlations. For example, MF-PES~\citep{zhang2017information} introduces a shared latent function, and uses kernel convolution to derive the cross-covariance between the fidelities. 
The most recent work,  MF-MES~\citep{takeno2019multi} 
 introduces $C$  kernel functions $\{\kappa_c(\cdot, \cdot)\}$ and, for each fidelity $m$,  $C$ latent features $\{\omega_{cm}\}$. The covariance function is defined as
\begin{align}
k\big(f_{m}(\x), f_{m'}(\x')\big) = \sum\nolimits_{c=1}^C(\omega_{cm}\omega_{cm'} + \tau_{cm}\delta_{mm'})\kappa_c(\x, \x'), \label{eq:mf-mes}
\end{align}
where $\tau_{cm}>0$,  $\delta_{mm'} = 1$ if and only if $m=m'$, and each kernel $\kappa_c(\cdot, \cdot)$ is usually assumed to be stationary, \eg Gaussian kernel. 

\vspace{-0.1in}
\section{Multi-Fidelity Modeling with Deep Neural Networks}
\vspace{-0.1in}
Despite the success of existing multi-fidelity BO methods, they either overlook the strong, complex correlations between different fidelities (\eg MF-GP-UCB) or model these correlations with an over-simplified structure. For example, the convolved GP in MF-PES has to employ simple/smooth kernels (typically Gaussian) for both the latent function and convolution operation to obtain an analytical cross-covariance function, which has limited expressiveness. MF-MES essentially adopts a linear correlation assumption between the fidelities. According to \eqref{eq:mf-mes}, if we choose each $\kappa_c$ as a Gaussian kernel (with amplitude one), we have  $k\big(f_m(\x), f_{m'}(\x)\big) = \bomega_m^\top \bomega_{m'} + \delta_{mm'}\tau_m $ where $\bomega_m = [\omega_{1m}, \ldots, \omega_{Cm}]^\top$ and  $\tau_m = \sum_{c=1}^C \tau_{cm}$.  These correlation structures might be over-simplified and insufficient to estimate the complicated relationships between the  fidelities (\eg highly nonlinear and nonstationary). Hence, they can limit the accuracy of the surrogate model and lower the optimization efficiency while increasing the query cost. 

To address this issue, we use deep neural networks  to build a multi-fidelity model that is flexible enough to capture all kinds of complicated relationships between the fidelities, taking advantage of the relationships to promote the accuracy of the surrogate model. Specifically, for each fidelity $m>1$, we introduce a neural network (NN) parameterized by $\{\w_m, \btheta_m\}$, where $\w_m$ are the weights in the output layer and $\btheta_m$ the weights in all the other layers. Denote the NN input by $\x_m$, the output by $f_m(\x)$ and the noisy observation by $y_m(\x)$. The model is defined as 
\begin{align}
\x_m = [\x; f_{m-1}(\x)], \;\;\; f_m(\x) =  \w_m^\top \bphi_{\btheta_m}(\x_m), \;\;\; y_m(\x) = f_m(\x) + \epsilon_m, \label{eq:mf-dnn}
\end{align}
where $\x$ is the original input to the objective function, $\bphi_{\btheta_m}(\x_m)$ is the output vector of the second last layer (hence parameterized by $\btheta_m$) which can be viewed as a set of nonlinear basis functions, and
 $\epsilon_m \sim \N(\epsilon_m|0, \sigma^2_m)$ is a Gaussian noise. The input $\x_m$ is obtained by appending the output from the previous fidelity to the original input. Through a series of linear and nonlinear transformations inside the NN, we obtain the output $f_m(\x)$. In this way, we digest the information from the lower fidelities, and capture the complex relationships between the current and previous fidelities by learning a nonlinear mapping $f_m(\x) = h(\x, f_{m-1}(\x))$, where $h(\cdot)$ is fulfilled by the NN. When $m=1$, we set $\x_m = \x$. A graphical representation of our model is given in Fig. 1 of the supplementary material.

We assign a standard normal prior over each $\w_m$. Following~\citep{snoek2015scalable}, we consider all the remaining NN parameters as hyper-parameters. Given the training set $\Dcal = \{\{(\x_{nm}, y_{nm})\}_{n=1}^{N_m}\}_{m=1}^M$, the joint probability of our model is 
\begin{align}
p(\Wcal, \Ycal|\Xcal, \Theta, \s) = \prod\nolimits_{m=1}^M \N(\w_m|\0, \I)\prod\nolimits_{n=1}^{N_m} \N\big(y_{nm}|f_m(\x_{nm}), \sigma_m^2\big), \label{eq:joint-prob}
\end{align}
where $\Wcal = \{\w_m\}$,  $\Theta=\{\btheta_m\}$, $\s = [\sigma^2_1, \ldots, \sigma^2_M]^\top$, and $\Xcal$, $\Ycal$ are the inputs and outputs in $\Dcal$. 

In order to obtain the posterior distribution of our model (which is in turn used to compute the acquisition function), we develop a stochastic variational learning algorithm. Specifically, for each $\w_m$,  we introduce a multivariate Gaussian posterior, $q(\w_m) = \N(\w_m|\bmu_m, \bSigma_m)$. We further parameterize $\bSigma_m$ with its Cholesky decomposition to ensure the positive definiteness, $\bSigma_m = \L_m \L_m^\top$ where $\L_m$ is a lower triangular matrix. We assume $q(\Wcal) = \prod_{m=1}^M q(\w_m)$, and construct a variational model evidence lower bound (ELBO), $\Lcal\big(q(\Wcal), \Theta, \s\big) = \EE_{q}[ \log(p(\Wcal, \Ycal|\Xcal, \Theta, \s)/q(\Wcal))]$. We then maximize the ELBO to jointly estimate the variational posterior $q(\Wcal)$ and all the other hyper-parameters. The ELBO is analytically intractable, and  we use the reparameterization trick~\citep{kingma2013auto} to conduct efficient stochastic optimization. The details are given in the supplementary material (Sec. 3). 

\section{Multi-Fidelity  Optimization with Max-Value Entropy Search}
\vspace{-0.1in}
We now consider an acquisition function  to select both the fidelities and input locations at which we query during  optimization. Following ~\citep{takeno2019multi}, we define the acquisition function as 
\begin{align}
a(\x, m) = \frac{1}{\lambda_m} I\left(f^*, f_m(\x)|\Dcal\right) = \frac{1}{\lambda_m} \left(H\big(f_m(\x)|\Dcal\big) - \EE_{p(f^*|\Dcal)}\left[H\big(f_m(\x)|f^*, \Dcal\big)\right]\right) \label{eq:ac}
\end{align}
where $\lambda_m>0$ is the cost of querying with fidelity $m$. In each step, we maximize the acquisition function to find a pair of input location and fidelity that provides the largest benefit-cost ratio. 

However, given the model inference result, \ie $p(\Wcal|\Dcal)\approx q(\Wcal)$,  a critical challenge is to compute the posterior distribution of the output in each fidelity, $p(f_m(\x)|\Dcal)$, and use them to compute the acquisition function. Due to the nonlinear coupling of the outputs in different fidelities (see \eqref{eq:mf-dnn}), the computation is analytically intractable. To address this issue, we conduct fidelity-wise moment matching and Gauss-Hermite quadrature to approximate each $p(f_m(\x)|\Dcal)$ as a Gaussian distribution. 

\subsection{Computing Output Posteriors}\label{sect:post}
\vspace{-0.1in}
Specifically, we first assume that we have obtained the posterior of the output for fidelity $m-1$, $p\big(f_{m-1}(\x)|\Dcal \big) \approx \N\big(f_{m-1}|\alpha_{m-1}(\x), \eta_{m-1}(\x)\big)$. For convenience, we slightly abuse the notation  and use $f_{m-1}$ and $f_m$ to denote $f_{m-1}(\x)$ and $f_m(\x)$, respectively. Now we consider calculating $p(f_m|\Dcal)$. According to \eqref{eq:mf-dnn}, we have $f_m = \w_m^\top \bphi_{\btheta_m}([\x;f_{m-1}])$. Based on our variational posterior $q(\w_m) = \N(\w_m|\bmu_m, \L_m\L_m^\top)$, we can immediately derive the conditional posterior $p(f_m|f_{m-1}, \Dcal) = \N\big(f_m|u(f_{m-1}, \x), \gamma(f_{m-1}, \x)\big) $ where $u(f_{m-1}, \x) = \bmu_m^\top \bphi_{\btheta_m}([\x;f_{m-1}])$ and $\gamma(f_{m-1}, \x) = \| \L_m^\top \bphi_{\btheta_m}([\x;f_{m-1}])\|^2$. Here $\|\cdot\|^2$ is the square norm. 
We can thereby read out the first and second conditional moments, 
\begin{align}
\EE [f_m|f_{m-1}, \Dcal] = u(f_{m-1}, \x), \;\;\; \EE [f_m^2|f_{m-1}, \Dcal]  = \gamma(f_{m-1}, \x) + u(f_{m-1}, \x)^2. \label{eq:cond-m}
\end{align}
To obtain the moments, we need to take the expectation of the conditional moments w.r.t $p(f_{m-1}|\Dcal) \approx  \N\big(f_{m-1}|\alpha_{m-1}(\x), \eta_{m-1}(\x)\big)$. While the conditional moments are nonlinear to $f_{m-1}$ and their expectation is not analytical, we can use Gauss-Hermite quadrature to give an accurate, closed-form approximation, 
\begin{align}
\EE[f_m|\Dcal] &= \EE_{p(f_{m-1}|\Dcal)} \EE[f_m|f_{m-1}, \Dcal] \approx \sum_k g_k \cdot u(t_k, \x), \notag \\
\EE[f_m^2|\Dcal] &= \EE_{p(f_{m-1}|\Dcal)}\EE[f^2_m|f_{m-1}, \Dcal] \approx \sum_k g_k \cdot [\gamma(t_k, \x) + u(t_k, \x)^2], \label{eq:quad}
\end{align}
where $\{g_k\}$ and $\{t_k\}$ are quadrature weights and nodes, respectively. Note that each node $t_k$ is determined by $\alpha_{m-1}(\x)$ and $\eta_{m-1}(\x)$. We then use these moments to construct a Gaussian posterior approximation,  $p(f_m|\Dcal) \approx \N\big(f_m|\alpha_m(\x), \eta_{m}(\x)\big)$ where $\alpha_m(\x) = \EE[f_m|\Dcal]$ and $\eta_{m}(\x) = \EE[f_m^2|\Dcal] - \EE[f_m|\Dcal]^2$. This is called moment matching, which is widely used and very  successful  in  approximate  Bayesian  inference, such as expectation-propagation~\citep{minka2001expectation}. One may concern if the quadrature will give a positive variance. This is guaranteed by the follow lemma.
\begin{lem}
	As long as the conditional posterior variance  $\gamma(f_{m-1}, \x)>0$, the posterior variance $\eta_{m}(\x)$,  computed based on the quadrature in \eqref{eq:quad}, is positive.\label{lem:lem-1}
\end{lem}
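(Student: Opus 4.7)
The plan is to write $\eta_m(\x)$ as a sum of two contributions coming from the quadrature formulas, and show that the first is strictly positive and the second is non-negative. Substituting the quadrature approximations in \eqref{eq:quad} into $\eta_m(\x) = \EE[f_m^2|\Dcal] - \EE[f_m|\Dcal]^2$, I get
\begin{align*}
\eta_m(\x) \;\approx\; \underbrace{\sum_k g_k\, \gamma(t_k, \x)}_{\text{(I)}} \;+\; \underbrace{\sum_k g_k\, u(t_k,\x)^2 \;-\; \Bigl(\sum_k g_k\, u(t_k,\x)\Bigr)^2}_{\text{(II)}}.
\end{align*}

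The key facts I will invoke are the two standard properties of Gauss–Hermite quadrature applied here with respect to the Gaussian weight $\N(f_{m-1}\mid\alpha_{m-1}(\x),\eta_{m-1}(\x))$: namely, the weights satisfy $g_k>0$ and $\sum_k g_k = 1$ (the latter because constants are integrated exactly). For (I), positivity of every $g_k$ together with the hypothesis $\gamma(f_{m-1},\x)>0$ (in particular at each node $t_k$) immediately gives (I) $> 0$. For (II), since the $g_k$ form a probability distribution over the nodes, Jensen's inequality applied to the convex map $z\mapsto z^2$ — equivalently Cauchy–Schwarz on $\sum_k (\sqrt{g_k})(\sqrt{g_k}\,u(t_k,\x))$ — yields (II) $\ge 0$. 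Adding (I) and (II) completes the argument.

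The only thing that requires a little care, and which I would view as the main (mild) obstacle, is stating explicitly which quadrature normalization is in force, since the formulas in \eqref{eq:quad} absorb the Gaussian density into the weights $\{g_k\}$ so that $\sum_k g_k = 1$; without this both the Jensen step and the interpretation of (II) as a discrete variance would be off by a constant factor. Once this normalization is pinned down, the proof is essentially a two-line verification: (I) contributes a strictly positive amount from the conditional variances and (II) is a discrete variance of the conditional means $u(t_k,\x)$ under the weights $g_k$, hence non-negative, so $\eta_m(\x)>0$.
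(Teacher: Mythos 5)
Your proof is correct and follows essentially the same route as the paper's: decompose $\eta_m(\x)$ into the quadrature average of conditional variances (strictly positive since $g_k>0$, $\gamma(t_k,\x)>0$) plus the discrete variance of the conditional means under the weights $\{g_k\}$ (non-negative). The paper verifies the second part by expanding $\sum_k g_k(u_k-\bar u)^2$ directly rather than citing Jensen/Cauchy--Schwarz, but this is the same argument.
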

The proof is given in the supplementary material. 
 Following the same procedure, we can compute the posterior of the output in fidelity $m+1$. Note that when $m=1$, we do not need quadrature because the input of the NN is the same as the original input, not including other NN outputs. Hence, we can derive the Gaussian posterior outright from $q(\w_1)$ --- $p(f_1(\x)|\Dcal) = \N\big(f_1(\x)|\alpha_1(\x), \eta_{1}(\x)\big)$, where $\alpha_1(\x) =  \bmu_1^\top \bphi_{\btheta_1}(\x)$ and $\eta_{1}(\x) =  \| \L_1^\top \bphi_{\btheta_1}(\x)\|^2$. 

\subsection{Computing Acquisition Function}
Given the posterior of the NN output in each fidelity,  $p\big(f_m(\x)|\Dcal) \approx \N(f_m(\x)|\alpha_m(\x), \eta_m(\x)\big) (1\le m \le M)$, we consider how to compute the acquisition function \eqref{eq:ac}. Due to the Gaussian posterior, the first entropy term is straightforward, $H\big(f^m(\x)|\Dcal\big) =\frac{1}{2}\log\big(2\pi e  \eta_{m}(\x)\big)$. The second term --- a conditional entropy, however, is intractable. Hence, we follow~\citep{wang2017max} to use a Monte-Carlo approximation, 
\[
\EE_{p(f^*|\Dcal)}[H\big(f^m(\x)|f^*, \Dcal\big)] \approx \frac{1}{|\Fcal|} \sum_{f^* \in \Fcal^*} H\big(f_m(\x)|f^*, \Dcal\big),
\]
where $\Fcal^*$ are a collection of independent samples of the function maximums based on the posterior distribution of our model. To obtain a sample of the function maximum, we first generate a posterior sample for each $\w_m$, according to $q(\w_m) = \N(\w_m|\bmu_m, \L_m\L_m^\top)$. We replace each $\w_m$ by their sample in calculating $f^M(\x)$ so as to obtain a posterior sample of the objective function. We then maximize this sample function to obtain one instance of $f^*$. We use L-BFGS~\citep{liu1989limited} for optimization. 

Given $f^*$, the computation of $H\big(f_m(\x)|f^*, \Dcal\big) = H\big(f_m(\x)|\max f_M(\x)=f^*, \Dcal\big)$ is still intractable.  We then follow~\citep{wang2017max} to  calculate $H\big(f_m(\x)|f_M(\x) \le f^*, \Dcal\big)$ instead as a reasonable approximation. 
For $m= M$, the entropy is based on a truncated Gaussian distribution, $p(f_M(\x)|f_M(\x) \le f^*, \Dcal) \propto \N\big(f_M(\x)|\alpha_M(\x), \eta_M(\x)\big)\mathds{1}(f_M(\x) \le f^*)$ where $\mathds{1}(\cdot)$ is the indicator function, and is given by 
\begin{align}
H\big(f_m(\x)|f_M(\x) \le f^*, \Dcal\big) =  \log\big(\sqrt{2\pi e \eta_M(\x)}\Phi(\beta)\big)  - {\beta \cdot \N(\beta|0, 1)}/{\big(2\Phi(\beta)\big)}, \label{eq:entropy-M}
\end{align}
where $\Phi(\cdot)$ is the cumulative density function (CDF) of the standard normal distribution, and $\beta = \big(f^* - \alpha_M(\x)\big)/{\sqrt{\eta_M(\x)}}$. When $m<M$, the entropy is based on the conditional distribution 
\begin{align}
&p(f_m(\x)|f_M(\x)\le f^*, \Dcal) = \frac{1}{Z} \cdot  p\big(f_m(\x)|\Dcal\big) p(f_M(\x)\le f^*|f_m(\x), \Dcal) \notag \\
&\approx\frac{1}{Z} \cdot  \N\big(f_m(\x)|\alpha_m(\x), \eta_{m}(\x)\big) p(f_M(\x)\le f^*|f_m(\x), \Dcal). \label{eq:f_mM}
\end{align}
where $Z$ is the normalizer. To obtain $p(f_M(\x)\le f^*|f_m(\x), \Dcal)$, we first consider how to compute $p(f_M(\x)|f_m(\x), \Dcal)$. According to \eqref{eq:mf-dnn}, it is trivial to derive that
\[
p(f_{m+1}(\x)|f_m(\x), \Dcal) = \N\big(f_{m+1}|\whalpha_{m+1}(\x, f_m), \wheta_{m+1}(\x, f_m)\big),
\] 
where $\whalpha_{m+1}(\x, f_m) = \bmu_{m+1}^\top \bphi_{\btheta_{m+1}}([\x;f_m])$ and  $\wheta_{m+1}(\x, f_m) = \| \L_{m+1}^\top \bphi_{\btheta_{m+1}}([\x;f_m])\|^2$. 
 Note that we again use $f_{m+1}$ and $f_m$ to denote $f_{m+1}(\x)$ and $f_m(\x)$ for convenience. Next, we follow the same method as in Section \ref{sect:post} to sequentially obtain the conditional posterior for each higher fidelity, $p(f_{m+k}|f_m, \Dcal) (1 < k \le M-m)$.  In more detail,  we first base on $q(\w_{m+k})$ to derive the conditional moments $\EE(f_{m+k}|f_{m+k-1}, f_m, \Dcal)$ and $\EE(f_{m+k}^2|f_{m+k-1}, f_m, \Dcal)$. They are calculated in the same way as in \eqref{eq:cond-m}, because $f_{m+k}$ are independent to $f_m$ conditioned on $f_{m+k-1}$. Then we take the expectation of the conditional moments w.r.t $p(f_{m+k-1}|f_m, \Dcal)$ (that is Gaussian) to obtain $\EE(f_{m+k}| f_m, \Dcal)$ and $\EE(f_{m+k}^2| f_m, \Dcal)$. This again can be done by Gauss-Hermite quadrature. Finally, we use these moments to construct a Gaussian approximation to the conditional posterior, 
 \begin{align}
 p(f_{m+k}|f_m, \Dcal) \approx \N\big(f_{m+k}|\whalpha_{m+k}(\x, f_m), \wheta_{m+k}(\x, f_m)\big), \label{eq:cond-post}
 \end{align}
 where $\whalpha_{m+k}(\x, f_m) =\EE(f_{m+k}| f_m, \Dcal) $ and $\wheta_{m+k}(\x, f_m) = \EE(f^2_{m+k}| f_m, \Dcal) - \EE(f_{m+k}| f_m, \Dcal)^2$. According to Lemma \ref{lem:lem-1}, we  guarantee $\wheta_{m+k}(\x, f_m)>0$. Now we can obtain 
 \begin{align}
 p(f_m(\x)|f_M(\x)\le f^*, \Dcal)  \approx\frac{1}{Z} \cdot  \N\big(f_m|\alpha_m(\x), \eta_{m}(\x)\big) \Phi\big(\frac{f^* - \whalpha_M(\x, f_m)}{\sqrt{\wheta_M(\x, f_m)}}\big) . \label{eq:f_mM2}
 \end{align}

In order to compute the entropy analytically, we use moment matching again to approximate this distribution as a Gaussian distribution. To this end, we use Gauss-Hermite quadrature to compute three integrals, $Z = \int R(f_m)\cdot \N\big(f_m|\alpha_m(\x), \eta_{m}(\x)\big)  \d f_m$, $Z_1 = \int f_m R(f_m) \cdot  \N\big(f_m|\alpha_m(\x), \eta_{m}(\x)\big) \d f_m$, and $Z_2 = \int f_m^2R(f_m)\cdot \N\big(f_m|\alpha_m(\x), \eta_{m}(\x)\big)  \d f_m$, where $R(f_m) = \Phi\big((f^* - \whalpha_M(\x, f_m))/{\sqrt{\wheta_M(\x, f_m)}}\big)$.
Then we can obtain $\EE[f_m|f_M\le f^*, \Dcal] = Z1/Z$ and $\EE[f_m^2|f_M \le f^*, \Dcal] = Z_2/Z$, based on which we approximate 
\begin{align}
p(f_m(\x)|f_M(\x)\le f^*, \Dcal) \approx \N\big(f_m|Z_1/Z, Z_2/Z - Z_1^2/Z^2\big). \label{eq:mm2}
\end{align}
Following the same idea to prove Lemma \ref{lem:lem-1}, we can show that the variance is non-negative. See the details in the supplementary material (Sec. 5).  With the Gaussian form, we can analytically compute the entropy, $H(f_m(\x)|f_M(\x)\le f^*, \Dcal) = \frac{1}{2}\log\big(2\pi e(Z_2/Z - Z_1^2/Z^2)\big)$. 

Although our calculation of the acquisition function is quite complex,  due to the analytical form, we can use automatic differentiation libraries~\citep{baydin2017automatic}, to compute the gradient efficiently and robustly for optimization. In our experiments, we used TensorFlow~\citep{abadi2016tensorflow} and L-BFGS to maximize the acquisition function to find the fidelity and input location we query at in the next step.  Our multi-fidelity Bayesian optimization algorithm is summarized in Algorithm~\ref{alg:dnn-mfbo}. 
\cmt{
\subsection{Algorithm Complexity}
The time complexity of our streaming inference is $\Ocal( NV+\sum_{k=1}^Kd_k r_k)$ where $V$ is the total number of weights in the NN. Therefore, the computational cost is proportional to $N$, the size of the streaming batch. The space complexity is $\Ocal(V + \sum_{k=1}^Kd_k r_k)$, including the storage of the posterior for the embeddings $\Ucal$,  NN weights $\Wcal$  and  selection indicators $\Scal$,  and the approximation term for the spike-and-slab prior. 
}

\setlength{\textfloatsep}{0.01cm}
 \begin{algorithm}                      
 	\small
	\caption{\ours($\Dcal$, $M$, $T$, $\{\lambda_m\}_{m=1}^M$ )}          
	\label{alg:dnn-mfbo}                           
	\begin{algorithmic}[1]                    
		\STATE Learn the DNN-based multi-fidelity model \eqref{eq:joint-prob} on $\Dcal$ with stochastic variational learning.  
		\FOR {$t=1,\ldots, T$}
			\STATE Generate $\Fcal^*$ from the variational posterior $q(\Wcal)$ and the NN output at fidelity $M$, \ie $f_M(\x)$
			\STATE $(\x_{t}, m_{t}) = \argmax_{\x \in \Xcal, 1\le m \le M} \mathrm{MutualInfo}(\x, m, \lambda_m, \Fcal^*, \Dcal, M)$
			\STATE $\Dcal \leftarrow \Dcal \cup \{(\x_{t}, m_{t})\}$
			\STATE Re-train the DNN-based multi-fidelity model on $\Dcal$
		\ENDFOR
		\vspace{-0.05in}
	\end{algorithmic}
\end{algorithm}
 \setlength{\floatsep}{0.05cm}
\begin{algorithm}           
	\small 
	\caption{MutualInfo($\x$, $m$, $\lambda_m$, $\Fcal^*$, $\Dcal$, $M$)}          
	\label{alg:MI}                           
	\begin{algorithmic}[1]                    
		\STATE  Compute each $p(f_m(\x)|\Dcal) \approx \N\big(f_m|\alpha_m(\x), \eta_m(\x)\big)$ (Sec. \ref{sect:post})
		\STATE $H_0 \leftarrow \frac{1}{2}\log(2\pi e \eta_{m}(\x))$, $H_1 \leftarrow 0$
		\FOR {$f^* \in \Fcal^*$}
		\IF {$m=M$}
			\STATE Use \eqref{eq:entropy-M} to compute $H(f_m|f_M\le f^*, \Dcal)$ and add it to $H_1$
		\ELSE
			\STATE Compute $p(f_m(\x)|f_M(\x), \Dcal)$ following \eqref{eq:cond-post} and $p(f_m(\x)|f_M(\x)\le f^*, \Dcal)$ with \eqref{eq:mm2}	
			\STATE $H_1 \leftarrow H_1 +  \frac{1}{2}\log\big(2\pi e(Z_2/Z - Z_1^2/Z^2)\big)$
		\ENDIF
		\ENDFOR
		\RETURN $(H_0 - H_1/|\Fcal^*|)/\lambda_m$
	\end{algorithmic}
\end{algorithm}
\setlength{\textfloatsep}{0.15cm}

\section{Related Work}
Most surrogate models used in 
Bayesian optimization (BO)~\citep{mockus2012bayesian,snoek2012practical} are based on Gaussian processes (GPs)~\citep{Rasmussen06GP}, partly because their  closed-form posteriors (Gaussian) are convenient to quantify the uncertainty and calculate the acquisition functions. However, GPs are known to be costly for training, and the exact inference takes $\Ocal(N^3)$ time complexity ($N$ is the number of samples). Recently, \citet{snoek2015scalable} showed deep neural networks (NNs) can also be used in BO and performs very well. The training of NNs are much more efficient ($\Ocal(N)$). To conveniently quantify the uncertainty, \citet{snoek2015scalable} consider the NN weights in the output layer as random variables and all the other weights as hyper-parameters (like the kernel parameters in GPs). They first obtain  a point estimation of the hyper-parameters (typically through stochastic training). Then they fix the hyper-parameters and compute the posterior distribution of the random weights (in the last layer) and NN output --- this can be viewed as the inference for Bayesian linear regression. In our multi-fidelity model, we also only consider the NN weights in the output layer of each fidelity as random variables. However, we jointly estimate the  hyper-parameters and posterior distribution of the random weights.  Since the NN outputs in successive fidelities are coupled non-linearly, we use the variational estimation framework~\citep{wainwright2008graphical}. 

Many multi-fidelity BO algorithms have been proposed. For example,   \citet{huang2006sequential, lam2015multifidelity,picheny2013quantile} augmented the standard EI for the multi-fidelity settings.  \citet{kandasamy2016gaussian,kandasamy2017multi} extended GP upper confidence bound (GP-UCB)~\citep{srinivas2010gaussian}.   \citet{poloczek2017multi,wu2018continuous} developed multi-fidelity BO with knowledge gradients~\citep{frazier2008knowledge}.  EI is a local measure of the utility and UCB requires us to explicitly tune the exploit-exploration trade-off.  The recent works also extend the information-based acquisition functions to enjoy a global utility for multi-fidelity optimization, \eg \citep{swersky2013multi, klein2017fast} using entropy search (ES),  \citep{zhang2017information,mcleod2017practical} (PES) using predictive entropy search (PES), and \citep{song2019general,takeno2019multi} using max-value entropy search (MES). Note that ES and PES are computationally more expensive than MES because the former calculate the entropy of the input (vector) and latter the output scalar.  Despite the great success of the existing methods, they either ignore or oversimplify the complex correlations across the fidelities, and hence might hurt the accuracy of the surrogate model and further the optimization performance. For example, \citet{picheny2013quantile,lam2015multifidelity,kandasamy2016gaussian,poloczek2017multi} train an independent GP for each fidelity;  \citet{song2019general} combined all the examples indiscriminately to train a single GP;  \citet{huang2006sequential,takeno2019multi} assume a linear correlation structure between fidelities, and \citet{zhang2017information} used the convolution operation to construct the covariance and so the involved kernels have to be simple and smooth enough (yet  less expressive) to obtain an analytical form. To overcome these limitations, we propose an NN-based multi-fidelity model, which is flexible enough to capture arbitrarily complex relationships between the fidelities and to promote the performance of the surrogate model. Recently, a NN-based multi-task model~\citep{perrone2018scalable} was also developed for BO and hyper-parameter transfer learning.  The model uses an NN to construct a shared feature map (\ie bases) across the tasks, and generates the output of each task by a linear combination of the latent features. While this model can also be used for multi-fidelity BO (each task corresponds to one fidelity), it views each fidelity as symmetric and  does not reflect the monotonicity of function accuracy/importance along with the fidelities. More important, the model does not capture the correlation between fidelities --- given the shared bases, different fidelities are assumed to be independent. Finally,  while a few algorithms deal with continuous fidelities, \eg~\citep{kandasamy2017multi,mcleod2017practical,wu2018continuous}, we focus on discrete fidelities in this work. 
 





\section{Experiment}
\subsection{Synthetic Benchmarks}
We first evaluated \ours in three popular synthetic benchmark tasks. (1) \textit{Branin}  function~\citep{forrester2008engineering,perdikaris2017nonlinear} with three fidelities. The input is two dimensional and ranges from $[-5, 10] \times [0, 15]$.  (2) \textit{Park1} function~\citep{park1991tuning} with two fidelities. The input is four dimensional and each dimension is in $[0, 1]$. (3) \textit{Levy} function~\citep{laguna2005experimental}, having three fidelities and two dimensional inputs. The domain is $[-10 , 10] \times [-10, 10]$. For each objective function, between fidelities can be nonlinear and/or nonstationary transformations. The detailed definitions are given in the supplementary material (Sec. 1). 

\noindent \textbf{Competing Methods.} We compared with the following popular and state-of-the-art multi-fidelity BO algorithms: (1) Multi-Fidelity Sequential Kriging (MF-SKO)~\citep{huang2006sequential} that models the function of the current fidelity as the function of the previous fidelity plus a GP,  (2) MF-GP-UCB~\citep{kandasamy2016gaussian}, (3) Multi-Fidelity Predictive Entropy Search (MF-PES)~\citep{zhang2017information} and (4) Multi-Fidelity Maximum Entropy Search (MF-MES)~\citep{takeno2019multi}. These algorithms extend the standard BO with EI, UCB, PES and MES principles respectively. We also compared with (5) multi-task NN based BO (MTNN-BO) by~\citet{perrone2018scalable}, where a set of latent bases (generated by an NN) are shared across the tasks, and the output of each task (\ie fidelity) is predicted by a linear combination of the bases.  We tested the single fidelity BO with MES, named as  (6) SF-MES~\citep{wang2017max}.  SF-MES only queries the objective at the highest fidelity. 

\noindent \textbf{Settings and Results.} We implemented our method and MTNN-BO with TensorFlow. We used the original Matlab implementation for MF-GP-UCB~(\url{https://github.com/kirthevasank/mf-gp-ucb}), MF-PES~(\url{https://github.com/YehongZ/MixedTypeBO}) and SF-MES~(\url{https://github.com/zi-w/Max-value-Entropy-Search/}), and Python/Numpy implementation for MF-MES. MF-SKO was implemented with Python as well. We used the default settings in their implementations. SF-MES and MF-GP-UCB used the Squared Exponential (SE) kernel.  MF-PES used the Automatic Relevance Determination (ARD) kernel. MF-MES and MF-SKO used the Radial Basis (RBF) kernel (within each fidelity). For \ours and MTNN-BO, we used ReLU activation. To identify the architecture of the neural network in each fidelity and learning rate, we first ran the AutoML tool SMAC3 (\url{https://github.com/automl/SMAC3}) on the initial training dataset (we randomly split the data into half for training and the other half for test, and repeated multiple times to obtain a cross-validation accuracy to guide the search) and then manually tuned these hyper-parameters. The depth and width of each network were chosen from $[2, 12]$ and $[32, 512]$, and the learning rate $[10^{-5}, 10^{-1}]$. We used ADAM~\citep{kingma2014adam} for stochastic training. The number of epochs was set to $5,000$, which is enough for convergence. To optimize the acquisition function,  MF-MES and MF-PES first run a global optimization algorithm DIRECT~\citep{jones1993lipschitzian,gablonsky2001modifications} and then use the results as the initialization to run L-BFGS. SF-MES uses a grid search first and then runs L-BFGS. \ours and MTNN-BO directly use L-BFGS  with a random initialization. To obtain the initial training points, we randomly query in each fidelity.  For \textit{Branin} and \textit{Levy}, we generated $20$, $20$ and $2$ training samples for the first, second and third fidelity, respectively. For \textit{Park1}, we generated $5$ and $2$ examples for the first and second fidelity. The query costs is $(\lambda_1, \lambda_2, \lambda_3) = (1, 10, 100)$. We examined the simple regret (SR) and inference regret (IR). SR is defined as the difference between the global optimum and the best queried function value so far:  $\max_{\x \in \Xcal}f_M(\x) - \max_{i \in \{i | i\in[t], m_i=M\}}f_M(\x_i)$; IR is the difference between the global optimum and the optimum estimated by the surrogate model: $ \max_{\x\in\Xcal} f_M(\x) - \max_{\x\in\Xcal}\widehat{f}_M(\x)$ where $\widehat{f}_M(\cdot)$ is the estimated objective. We repeated the experiment for five times, and report on average how the simple and inference regrets vary along with the query cost in Fig. \ref{fig:syn_regs} (a-c, e-g). We also show the standard error bars. As we can see, in all the three tasks, \ours achieves the best regrets with much smaller or comparable querying costs.  The best regrets obtained by our method are much smaller (often orders of magnitude) than the baselines. In particular, \ours almost achieved the global optimum after querying one point ($\text{IR} < 10^{-6}$) (Fig. \ref{fig:syn_regs}f). These results demonstrate our DNN based surrogate model is more accurate in estimating the objective. Furthermore,  our method spends less or comparable cost to achieve the best regrets, showing a much better benefit/cost ratio.  

\begin{figure*}
	\centering
	\setlength\tabcolsep{0.01pt}
	\begin{tabular}[c]{cccc}
		\begin{subfigure}[t]{0.26\textwidth}
			\centering
			\includegraphics[width=\textwidth]{./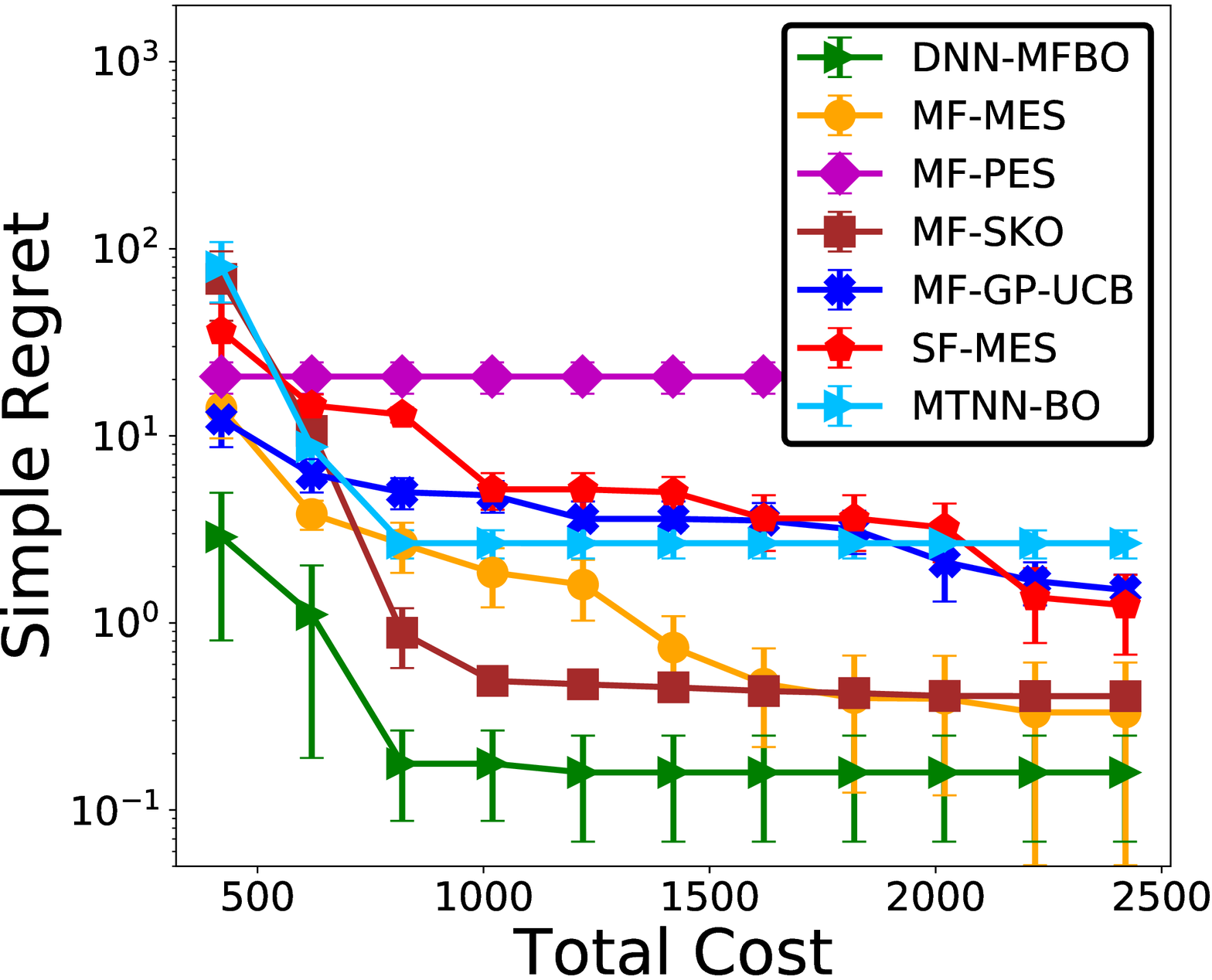}
			\caption{\textit{Branin}}
		\end{subfigure} 
		&
		\begin{subfigure}[t]{0.24\textwidth}
			\centering
			\includegraphics[width=\textwidth]{./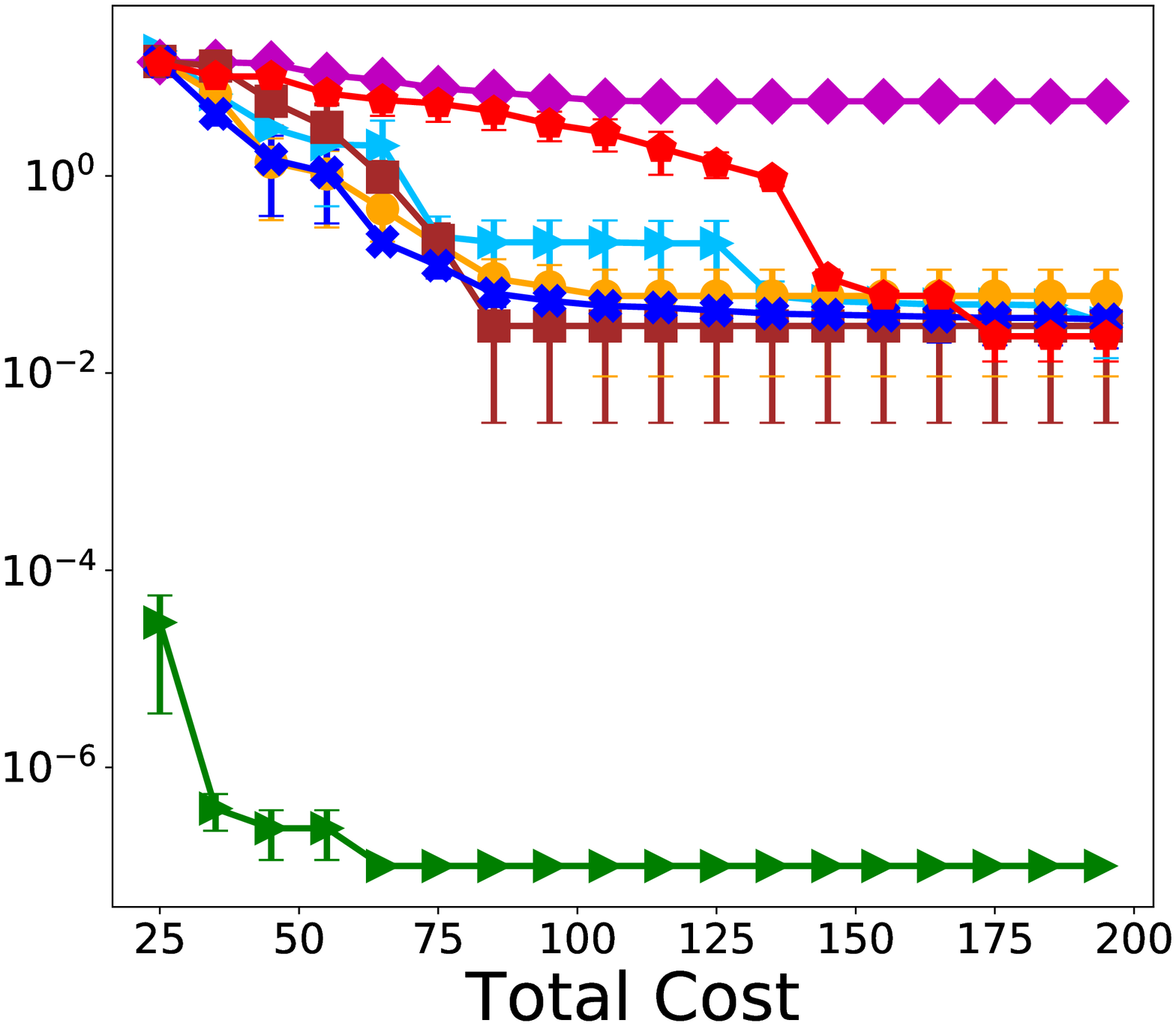}
			\caption{\textit{Park1}}
		\end{subfigure}
		&
		\begin{subfigure}[t]{0.24\textwidth}
			\centering
			\includegraphics[width=\textwidth]{./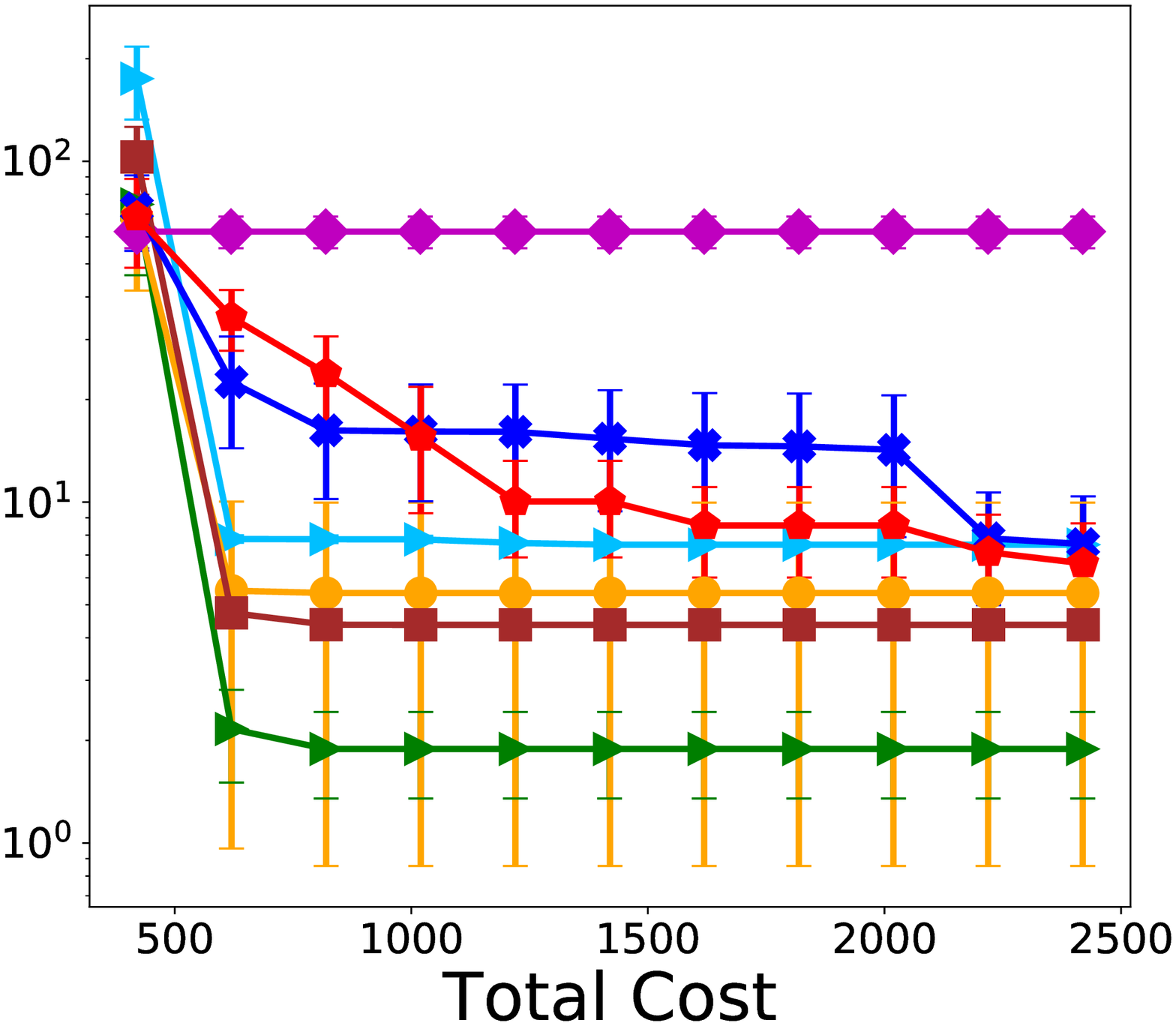}
			\caption{\textit{Levy}}
		\end{subfigure} 
		&
		\begin{subfigure}[t]{0.245\textwidth}
			\centering
			\includegraphics[width=\textwidth]{./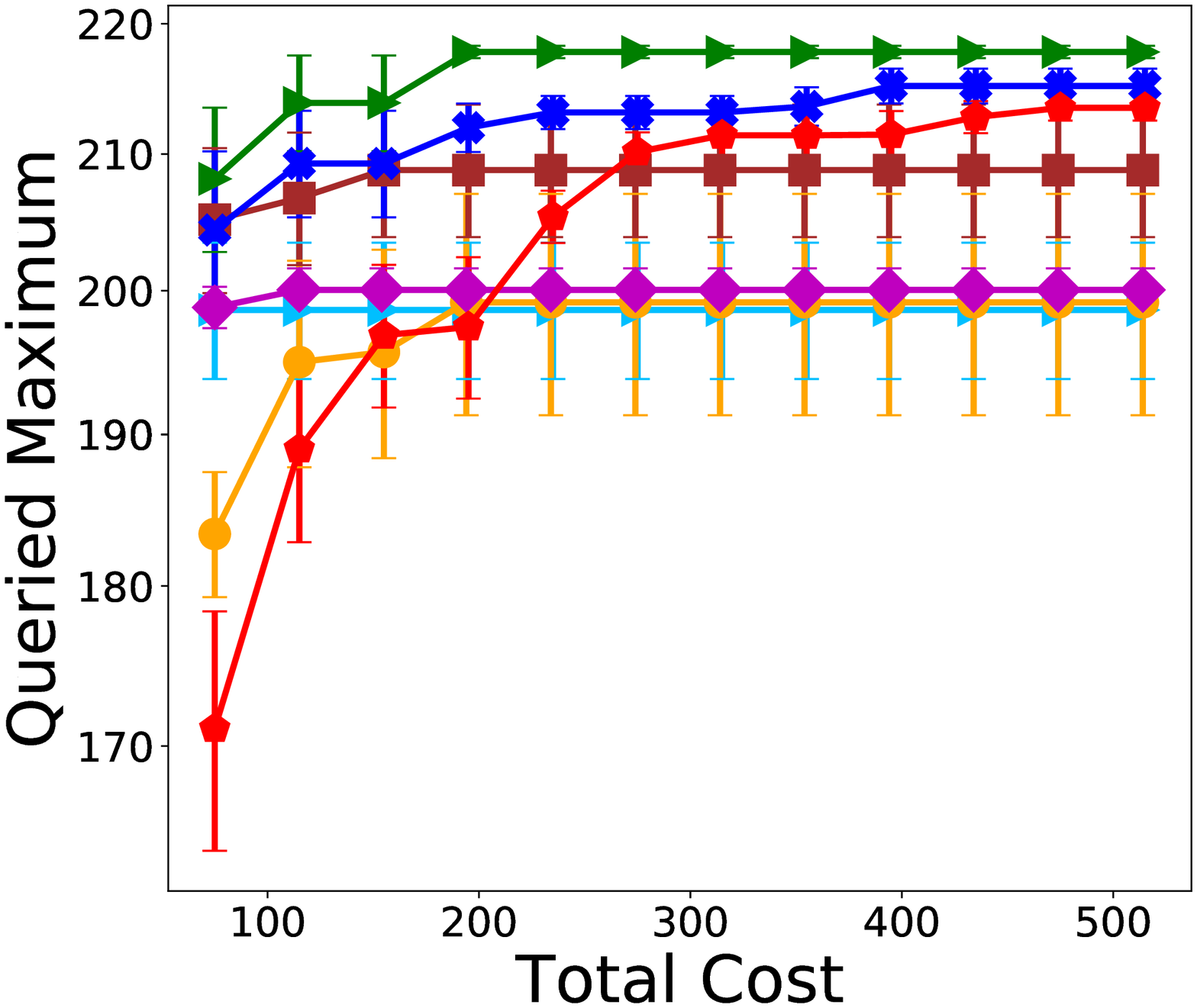}
			\caption{\textit{Vibration Plate}}
		\end{subfigure}
		\\
		\begin{subfigure}[t]{0.26\textwidth}
			\centering
			\includegraphics[width=\textwidth]{./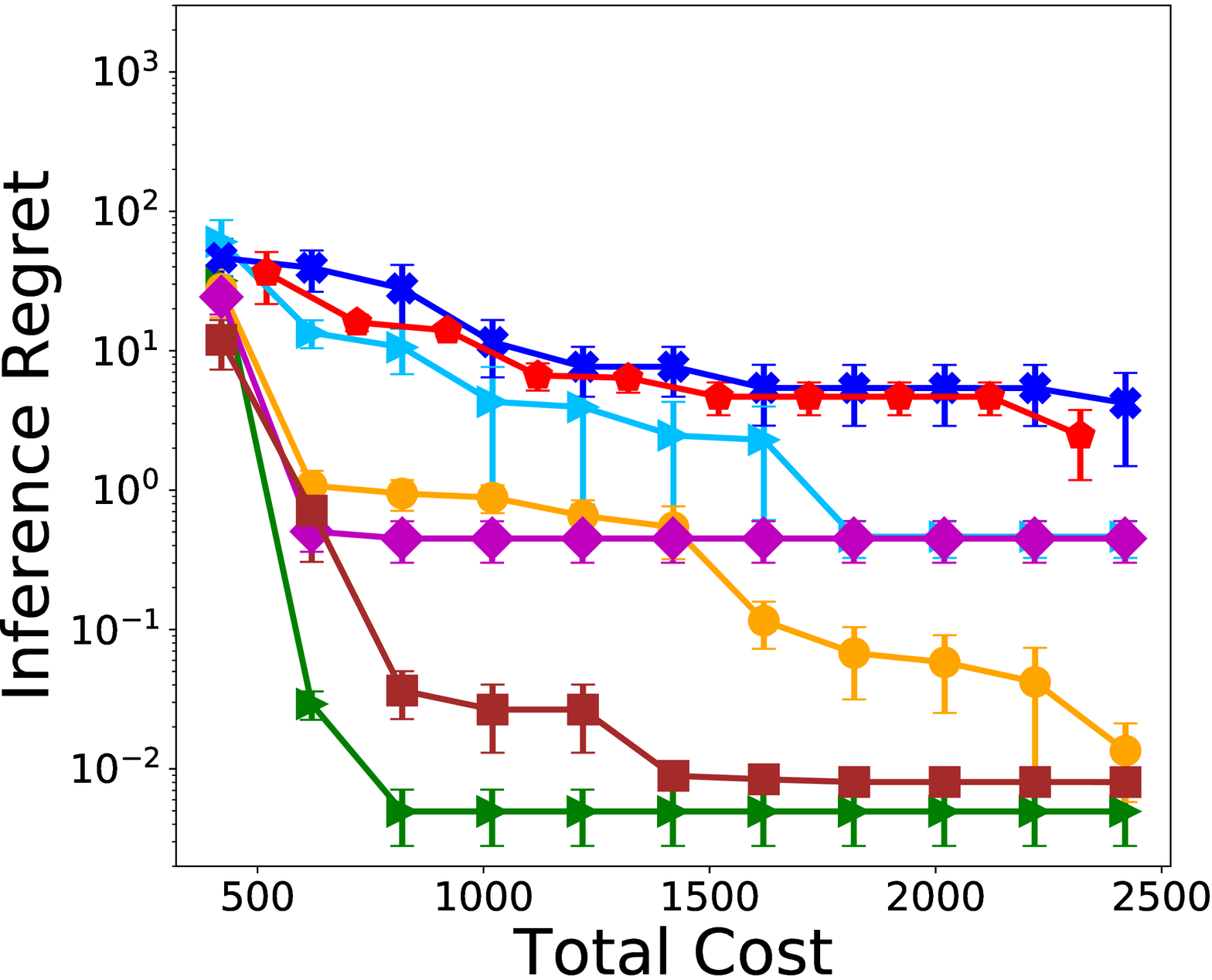}
			\caption{\textit{Branin}}
		\end{subfigure} 
		&
		\begin{subfigure}[t]{0.24\textwidth}
			\centering
			\includegraphics[width=\textwidth]{./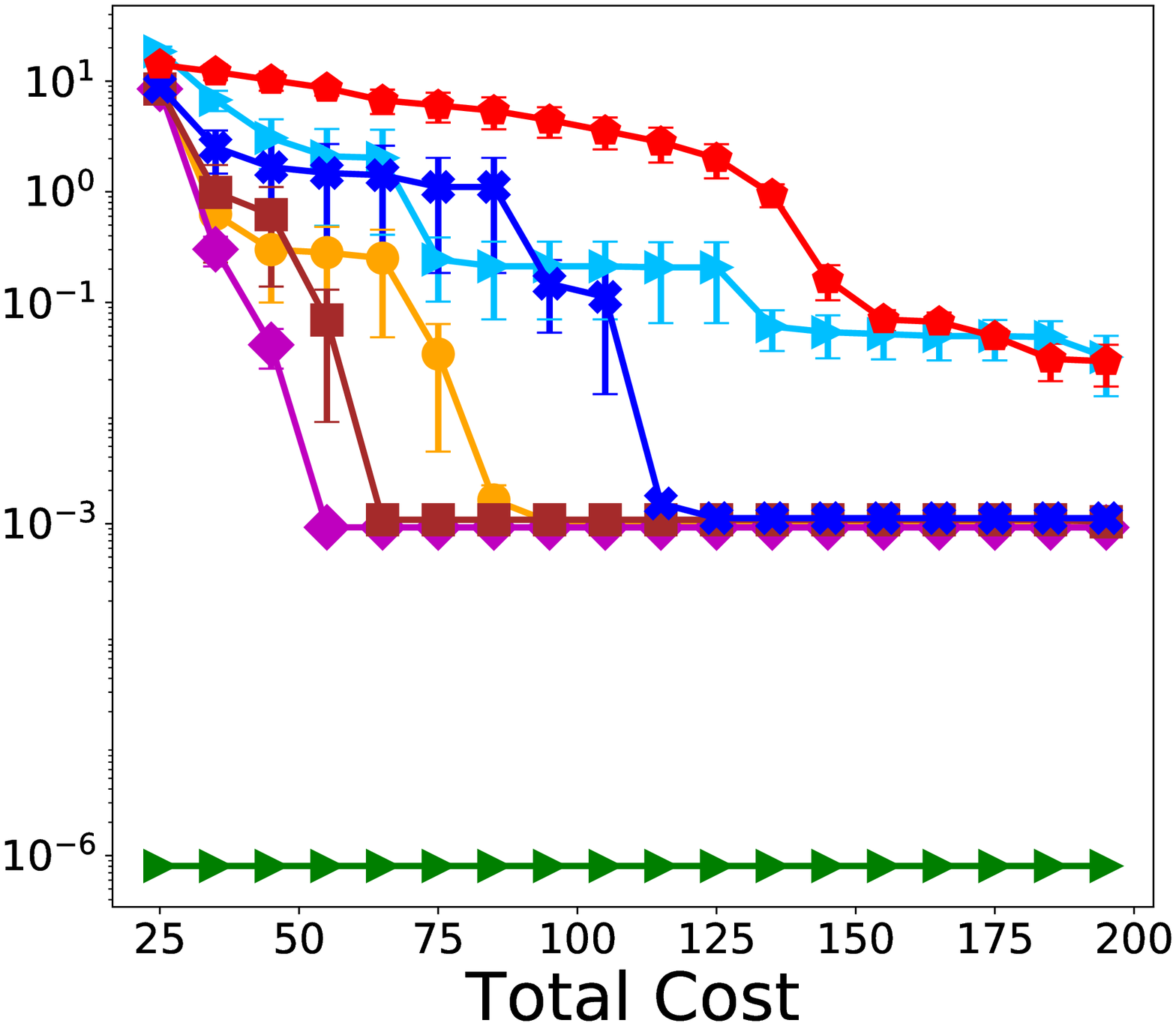}
			\caption{\textit{Park1}}
		\end{subfigure} 
		&
		\begin{subfigure}[t]{0.24\textwidth}
			\centering
			\includegraphics[width=\textwidth]{./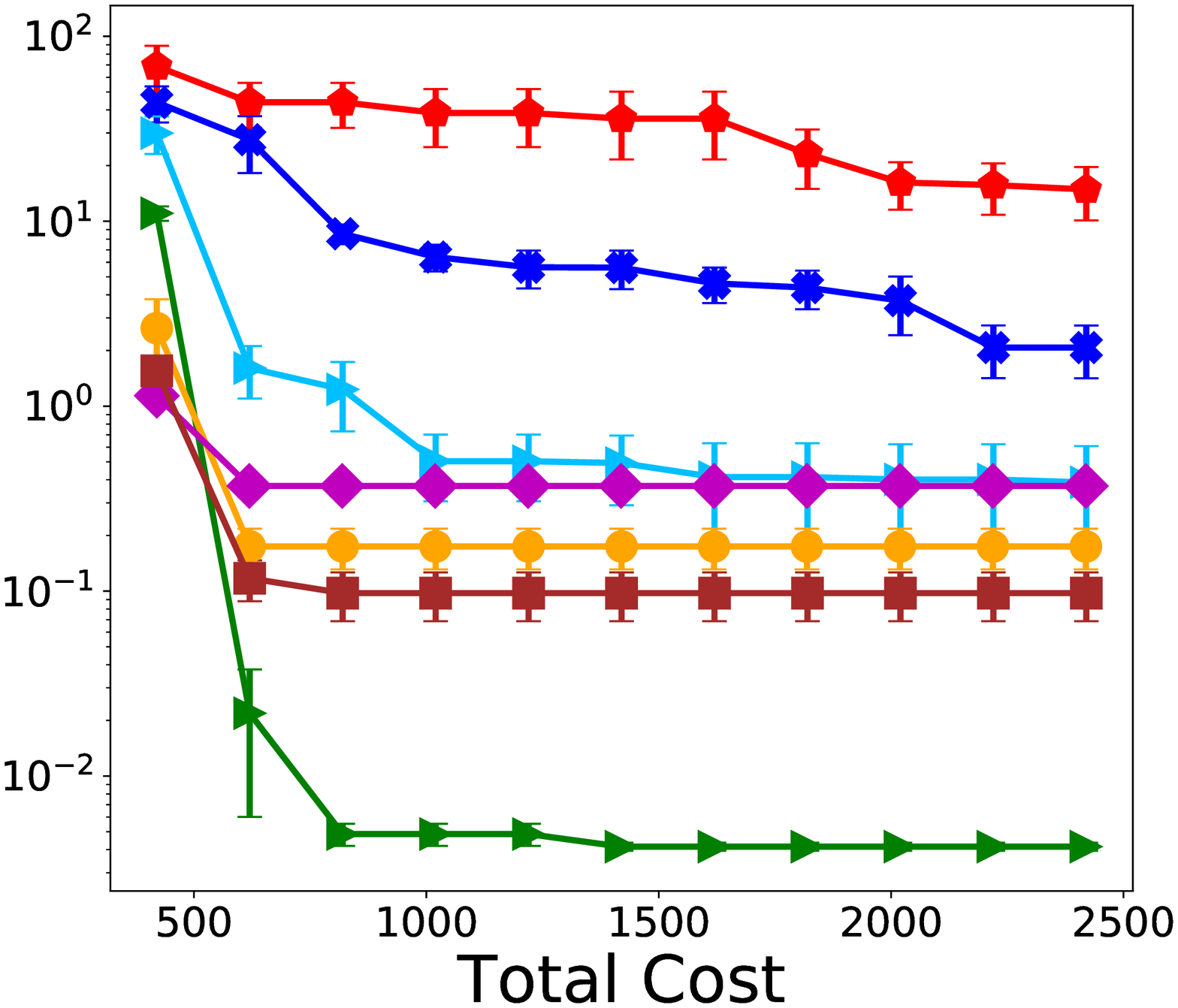}
			\caption{\textit{Levy}}
		\end{subfigure}
		&
		\begin{subfigure}[t]{0.26\textwidth}
			\centering
			\includegraphics[width=\textwidth]{./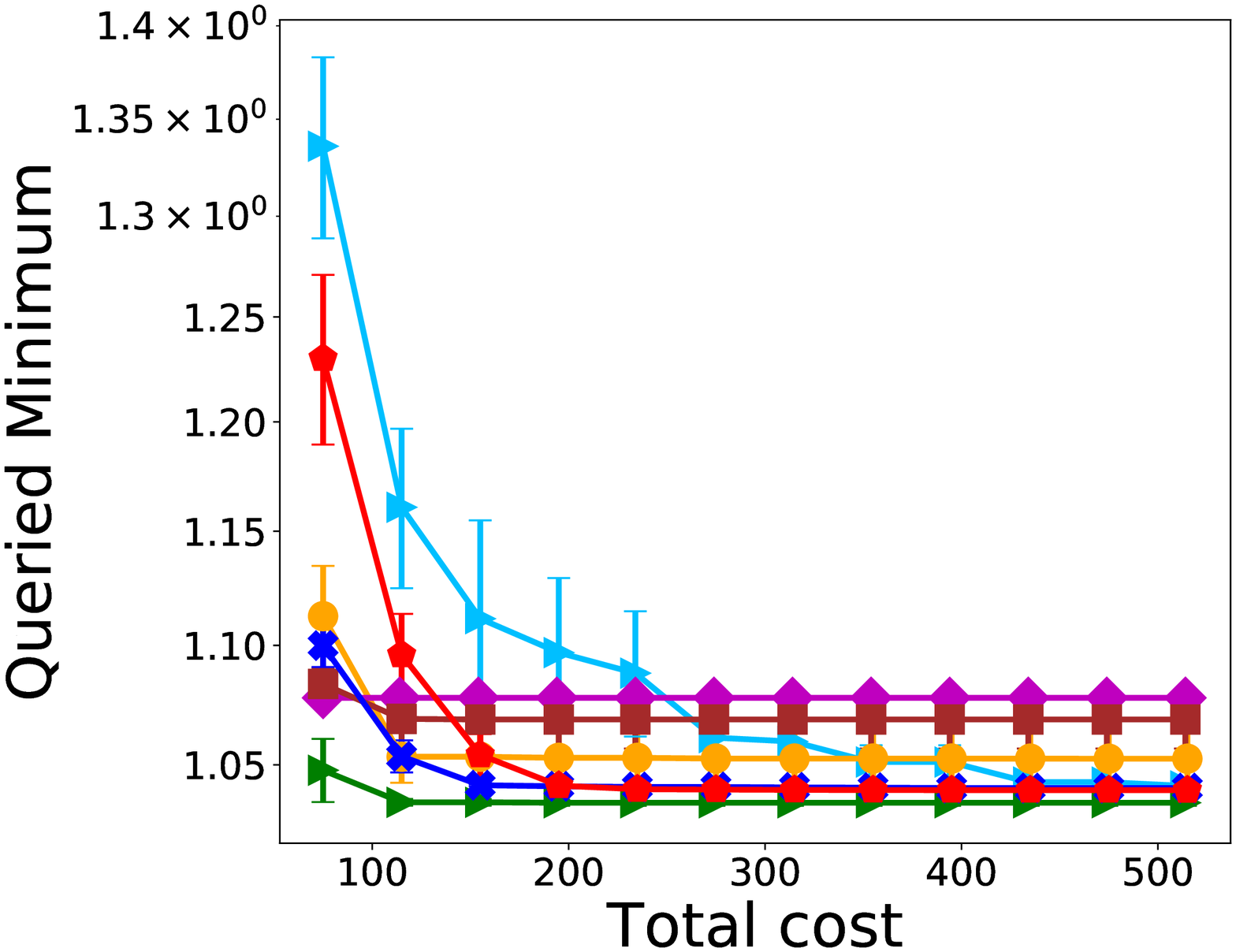}
			\caption{\textit{Thermal Conductor}}
		\end{subfigure}
	\end{tabular}
	\vspace{-0.1in}	
	\caption{\small Simple and Inference regrets on three synthetic benchmark tasks (a-c, e-g) and the optimum queried function values (d, h) along with the query cost.} 	
	\label{fig:syn_regs}
\end{figure*}

\begin{figure}[htbp]
	\centering
	\setlength\tabcolsep{0.01pt}
	\begin{tabular}[c]{ccccc}
	\begin{subfigure}[t]{0.2\textwidth}
		\centering
		\includegraphics[width=\textwidth]{./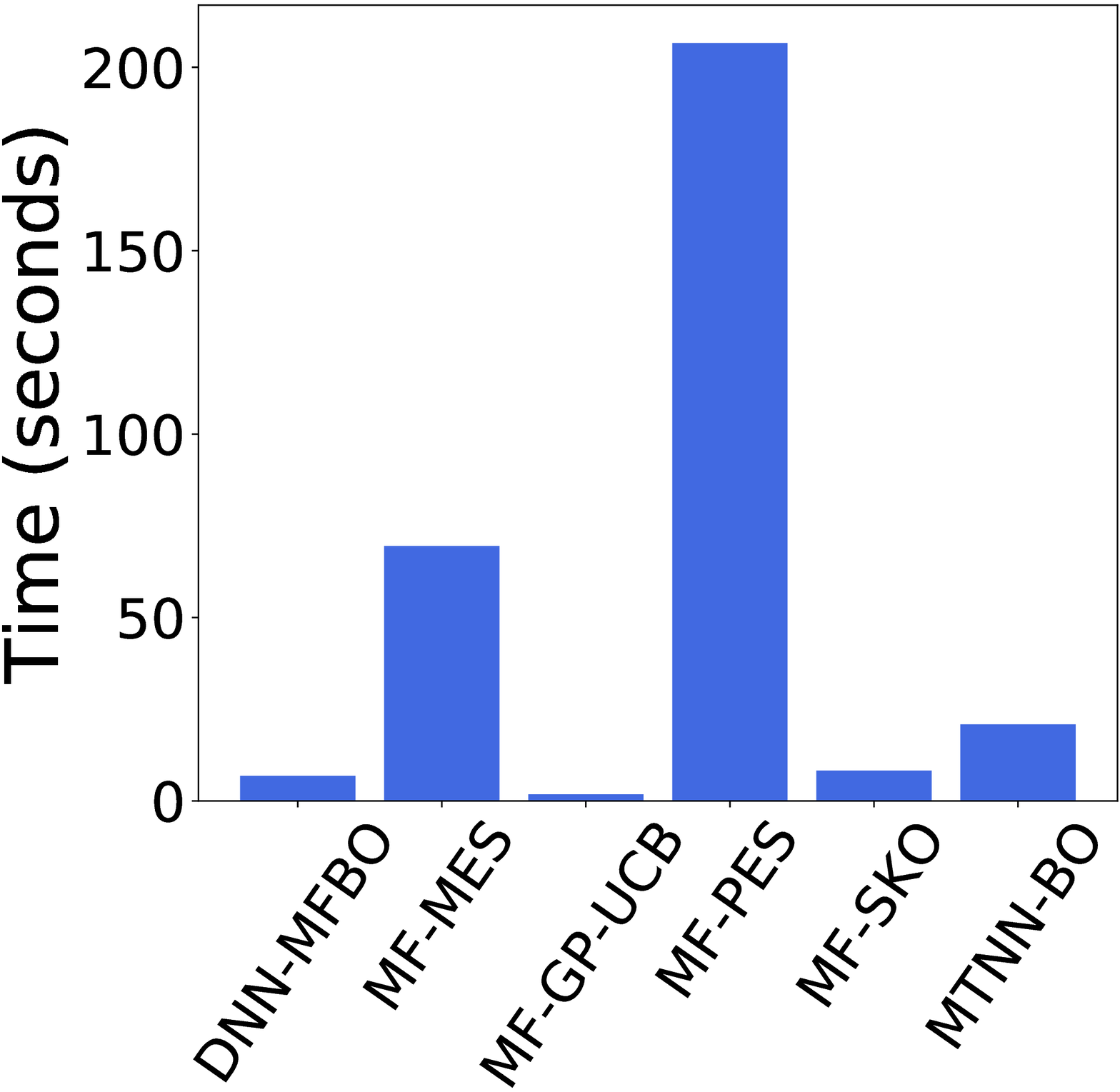}
		\caption{\small \textit{Branin}}
		\vspace{-0.1in}
	\end{subfigure} &
	\begin{subfigure}[t]{0.19\textwidth}
		\centering
		\includegraphics[width=\textwidth]{./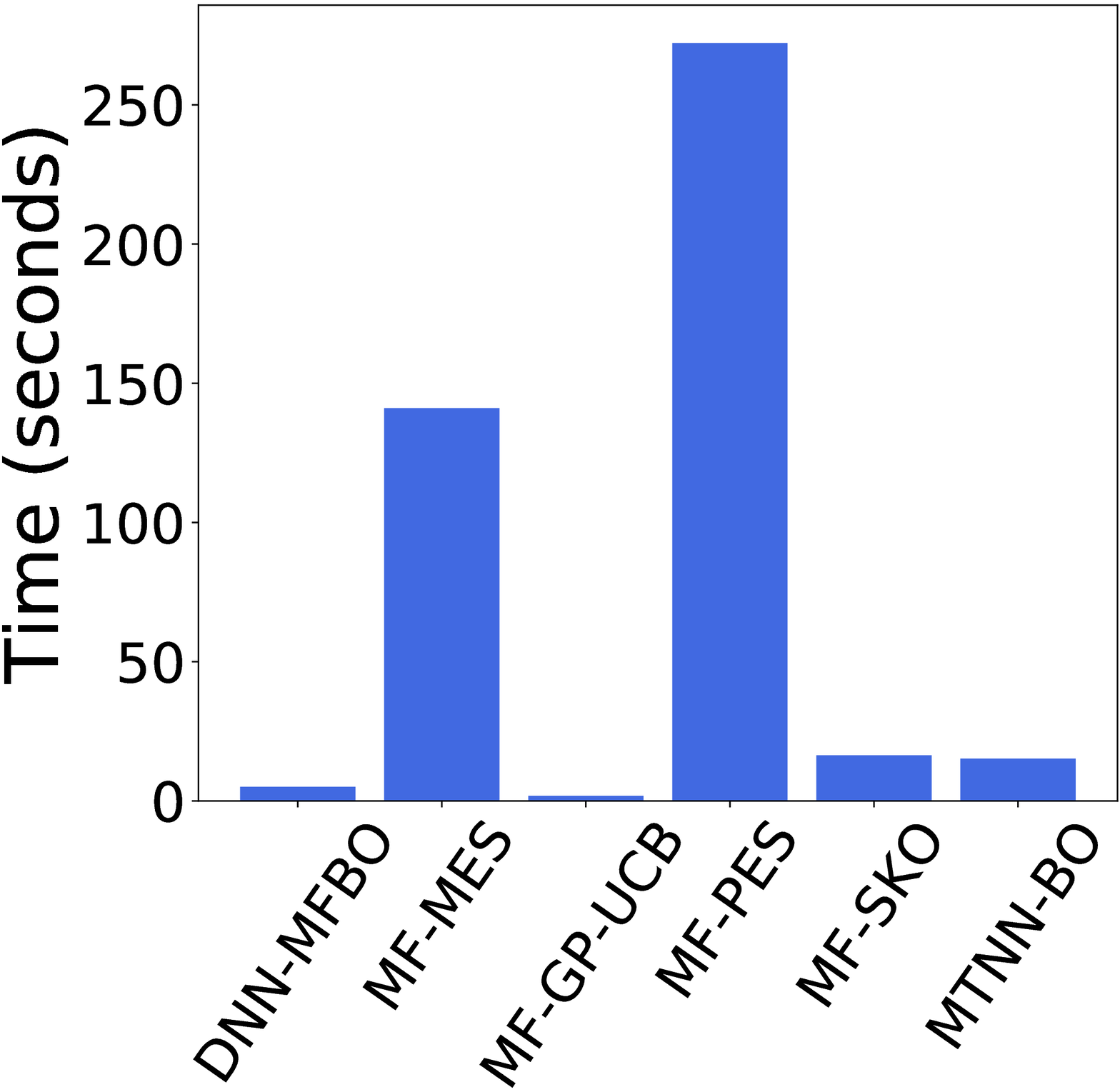}
		\caption{\small \textit{Park1}}
		\vspace{-0.1in}
	\end{subfigure} &
	\begin{subfigure}[t]{0.19\textwidth}
		\centering
		\includegraphics[width=\textwidth]{./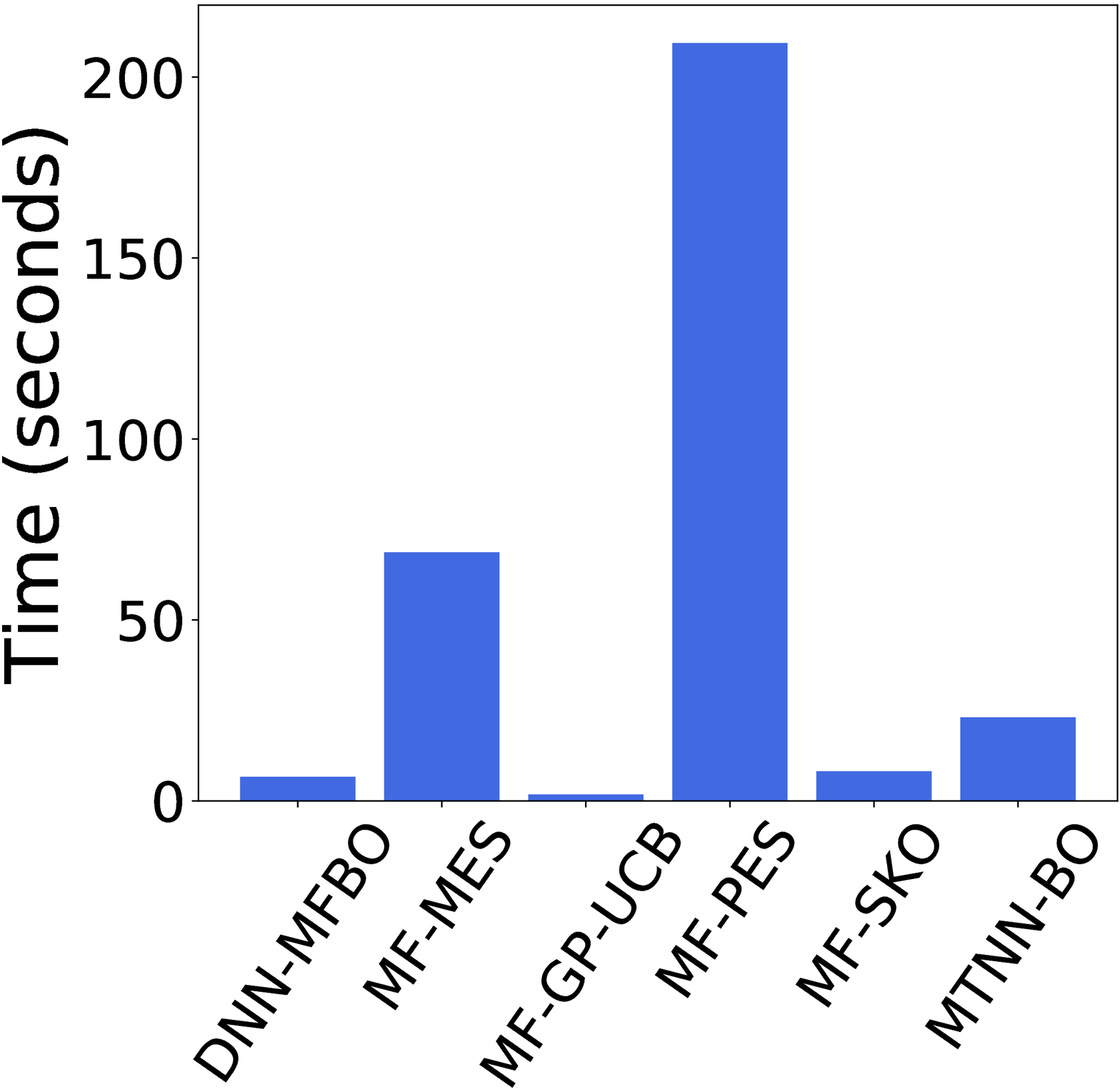}
		\caption{\small \textit{Levy}}
		\vspace{-0.1in}
	\end{subfigure} &
	\begin{subfigure}[t]{0.19\textwidth}
	\centering
	\includegraphics[width=\textwidth]{./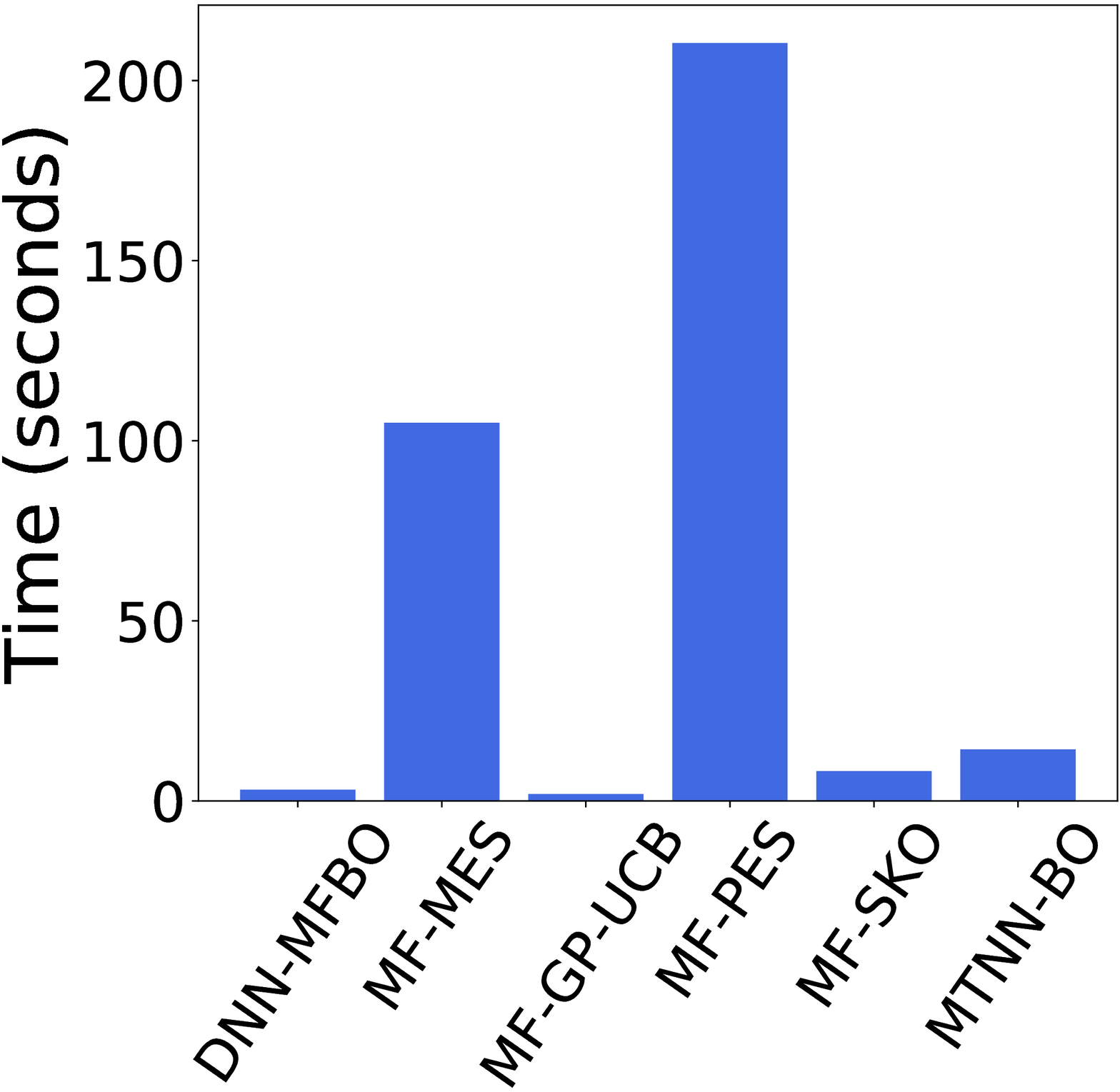}
	\caption{\small \textit{Vibration Plate}}
	\vspace{-0.1in}
	\end{subfigure} &
	\begin{subfigure}[t]{0.19\textwidth}
	\centering
	\captionsetup{justification=centering}
	\includegraphics[width=\textwidth]{./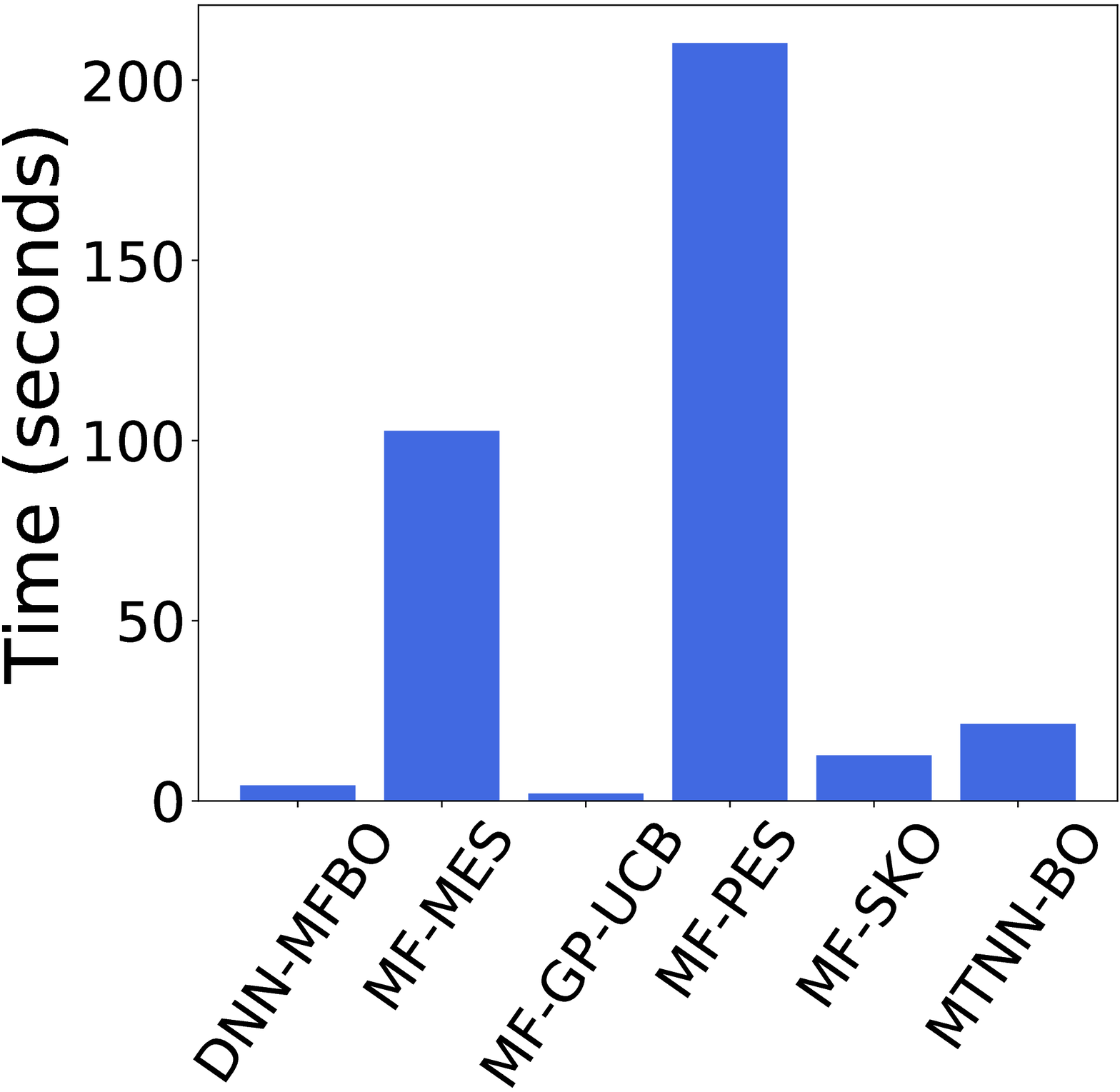}
	\caption{\small \textit{Heat Conductor}}
	\vspace{-0.1in}
	\end{subfigure} 
\end{tabular}
	\caption{The average query time on three synthetic tasks (a-c) and two real-world applications (d-e).}
	\label{fig:syn_timing}
\end{figure} 

\subsection{Real-World Applications in Engineering Design}
\noindent \textbf{Mechanical Plate Vibration Design.}  We aim to optimize three material properties, Young's modulus (in $[1\times 10^{11}, 5\times 10^{11}]$), Poisson's ratio (in $[0.2, 0.6]$)   and mass density (in $[6\times 10^3, 9\times 10^3]$), to maximize the fourth vibration mode frequency of a 3-D simply supported, square, elastic plate, of size $10 \times 10 \times 1$. To evaluate the frequency, we need to run a numerical solver on the discretized plate. We considered two fidelities, one with a coarse mesh and the other a dense mesh. The details about the settings of the solvers are provided the supplementary document. 

\noindent \textbf{Thermal Conductor Design.} Given the property of a particular thermal conductor, our goal is to optimize the shape of the central hole where we install/fix the conductor to make the heat conduction (from left to right) to be as as fast as possible. The shape of the hole (an ellipse) is described by three parameters: x-radius, y-radius and angle. We used the time to reach 70 degrees as the objective function value and we want to minimize the objective. We need to run numerical solvers to calculate the objective. We considered two fidelities. The details are given in the supplementary material. 

For both problems, we randomly queried at 20 and 5 inputs in the low and high fidelities respectively, at the beginning. The query cost is $(\lambda_1, \lambda_2) = (1, 10)$. We then ran each algorithm until convergence. We repeated the experiments for five times. Since we do not know the ground-truth of the global optimum, we report how the average of the best function values queried improves along with the cost. The results are shown in Fig. \ref{fig:syn_regs}d and h. As we can see, in both applications, \ours reaches the maximum/minimum function values with a smaller cost than all the competing methods, which is consistent with results in the synthetic benchmark tasks.  

Finally, we examined the average query time of each multi-fidelity BO method, which is spent in calculating and optimizing the acquisition function to find new inputs and fidelities to query at in each step. For a fair comparison, we ran all the methods on a Linux workstation with a 16-core Intel(R) Xeon(R) CPU E5-2670  and 16GB RAM. As shown in Fig. \ref{fig:syn_timing}, \ours spends much less time than MF-MES and MF-PES that are based on multi-output GPs, and the speed of \ours is close or comparable to MF-GP-UCB and MF-SKO, which use independent and additive GPs for each fidelity, respectively. On average, \ours achieves 25x and 60x speedup over MF-MES and MF-PES. One reason might be that \ours simply adopts a random initialization for L-BFGS rather than runs an expensive global optimization (so does MTNN-BO). However, as we can see from Fig. \ref{fig:syn_regs}, \ours still obtains new input and fidelities that achieve much better benefit/cost ratio.  On the other hand, the close speed to MF-GP-UCB and MF-SKO  also demonstrate that our method is  efficient in acquisition function calculation, despite its seemingly complex approximations. 

\section{Conclusion}
We have presented \ours, a deep neural network based multi-fidelity Bayesian optimization algorithm. Our DNN surrogate model is flexible enough to capture the strong and complicated relationships between fidelities and promote objective estimation. Our information based acquisition function not only enjoys a global utility measure, but also is computationally tractable and efficient. 
\section*{Acknowledgments}
This work has been supported by DARPA TRADES Award HR0011-17-2-0016 and NSF IIS-1910983.
\section*{Broader Impact}
This work can be used in a variety of  engineering design problems that involve intensive computation, \eg finite elements or differences. Hence, the work has potential positive impacts  in the society if it is used to design passenger aircrafts, biomedical devices, automobiles, and all the other devices or machines that can  benefit human lives.  At the same time, this work may have some negative consequences if it is used to design weapons or weapon parts.

\bibliographystyle{apalike}
\bibliography{DeepMFBO}
\newpage
\setcounter{section}{0}
\section*{Supplementary Material}
\begin{figure}[htbp]
	\centering
	\includegraphics[width=0.5\textwidth]{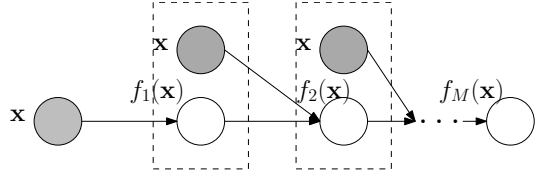}
	\caption{\small Graphical representation of the DNN based multi-fidelity surrogate model. The output in each fidelity $f_m(\x)$ ($1\le m \le M$) is fulfilled by a (deep) neural network.} \label{fig:graphical}
\end{figure}
\section{Definitions of Synthetic Benchmark Functions}
In the experiments, we used three synthetic benchmark tasks to evaluate our method. The definitions of the objective functions are given as follows. 
\subsection{Branin Function}
The input is two dimensional, $\x = [x_1, x_2] \in [-5, 10]\times [0, 15]$. We have three fidelities to query the function, which, from high to low, are given by
\begin{align}
f_3(\x) &= -\left(\frac{-1.275x_1^2}{\pi^2} + \frac{5x_1}{\pi} + x_2 - 6\right)^2  - \left(10 - \frac{5}{4\pi}\right)\cos(x_1) - 10, \notag \\
f_2(\x) &= -10\sqrt{-f_3(x - 2)} - 2(x_1 - 0.5) + 3(3x_2 - 1) + 1, \notag\\
f_1(\x) &= -f_2\big(1.2(\x+2)\big) + 3x_2 - 1.
\end{align}
We can see that between fidelities are nonlinear transformations and non-uniform scaling and shifts. The global maximum is -0.3979 at $(-\pi, 12.275), (\pi, 2.275)$ and $(9.425, 2.475)$.
\subsection{Park1 Function}
The input is four dimensional, $\x = [x_1, x_2, x_3, x_4] \in [0, 1]^4$. We have two fidelities, 
\begin{align}
f_2(\x) &= \frac{x_1}{2}\left[\sqrt{1 + (x_2 + x_3^2)\frac{x_4}{x_1^2}} - 1\right] + (x_1 + 3x_4)\exp[1+\sin(x_3)], \notag \\
f_1(\x) &= \left[1 + \frac{\sin(x_1)}{10}\right]f_2(\x) - 2x_1 + x_2^2 + x_3^2 + 0.5. 
\end{align}
The global maximum is at 25.5893 at $(1.0, 1.0, 1.0, 1.0)$.
\subsection{Levy Function}
The input is two dimensional, $\x = [x_1, x_2] \in [-10, 10]^2$. The query has three fidelities, 
\begin{align}
f_3(\x) &= -\sin^2(3\pi x_1) - (x_1-1)^2[1 + \sin^2(3\pi x_2)] - (x_2 - 1)^2[1 + \sin^2(2\pi x_2)], \notag \\
f_2(\x) &= -\exp(0.1\cdot\sqrt{-f_3(\x})) - 0.1\cdot\sqrt{1 + f_3^2(\x)}, \notag \\
f_1(\x) &= -\sqrt{1 + f_3^2(\x)}. 
\end{align}
The global maximum is $0.0$ at $(1.0, 1.0)$.

\section{Details of Real-World Applications}
\subsection{Mechanical Plate Vibration Design}
In this application, we want to make a 3-D simply supported, 
square, elastic plate, of size $10 \times 10 \times 1$, as shown in Fig. \ref{fig:plate}. The goal is to find materials that can maximize the fourth vibration mode frequency (so as to avoid resonance with other parts which causes damages). The materials are parameterized by three properties,  Young's modulus (in $[1\times 10^{11}, 5\times 10^{11}]$), Poisson's ratio (in $[0.2, 0.6]$)   and mass density (in $[6\times 10^3, 9\times 10^3]$). 

To compute the frequency, we discretize the plate with quadratic tetrahedral elements (see Fig. \ref{fig:plate}). We consider two fidelities. The low-fidelity solution is obtained from setting a maximum mesh edge length to $1.2$, while the high-fidelity $0.6$. We then use the finite finite element method \citep{zienkiewicz1977finite} to solve for the first 4th vibration mode and compute the frequency as our objective.

\begin{figure}[htbp]
	\centering
	\includegraphics[width=0.5\textwidth]{./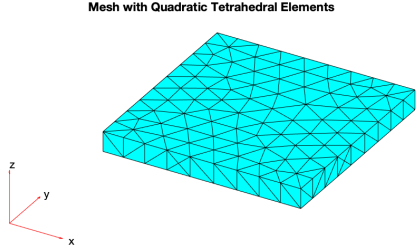}
	\caption{\small The plate discretized with quadratic tetrahedral elements (the maximum mesh edge length is $1.2$).} \label{fig:plate}
\end{figure}

\subsection{Thermal Conductor Design}
In the second application, we consider the design of a thermal conductor, shown in Fig. \ref{fig:conductor}a. The heat source is on the left, where the temperature is zero at the beginning and ramps to $100$ degrees in $0.5$ seconds. The heat runs through the conductor to the right end.  The size and properties of the conductor are fixed: the thermal conductivity and mass density are both $1$. We need to bore a hole in the centre to install the conductor. The edges on the top,  bottom and inside the hole are all insulated, \ie no heat is transferred across these edges. Note that the size and the angle of the hole determine the speed of the heat transfusion. The hole in general is an ellipse, described by three parameters, x-radius, y-radius and angle. The goal is to make the heat conduction (from left to right) as fast as possible. Hence, we use the time to reach 70 degrees on the right end as the objective function value. To compute the time, we discretize the conductor with quadratic tetrahedral elements, and apply the finite element methods to solve a transient heat transfer problem~\citep{incropera2007fundamentals} to obtain a response heat curve on the right edge. An example is given in Fig. \ref{fig:conductor}b. The response curve is a function of time, from which we can calculate when the temperature reaches 70 degrees. We consider queries of two fidelities. The low fidelity queries are computed with the maximum mesh edge length being 0.8 in solving the heat transfer problem; the high fidelity queries are computed with the maximum mesh edge length  being 0.2. 
\begin{figure*}
	\centering
	\begin{tabular}[c]{cc}
		\begin{subfigure}[t]{0.48\textwidth}
			\centering
			\includegraphics[width=\textwidth]{./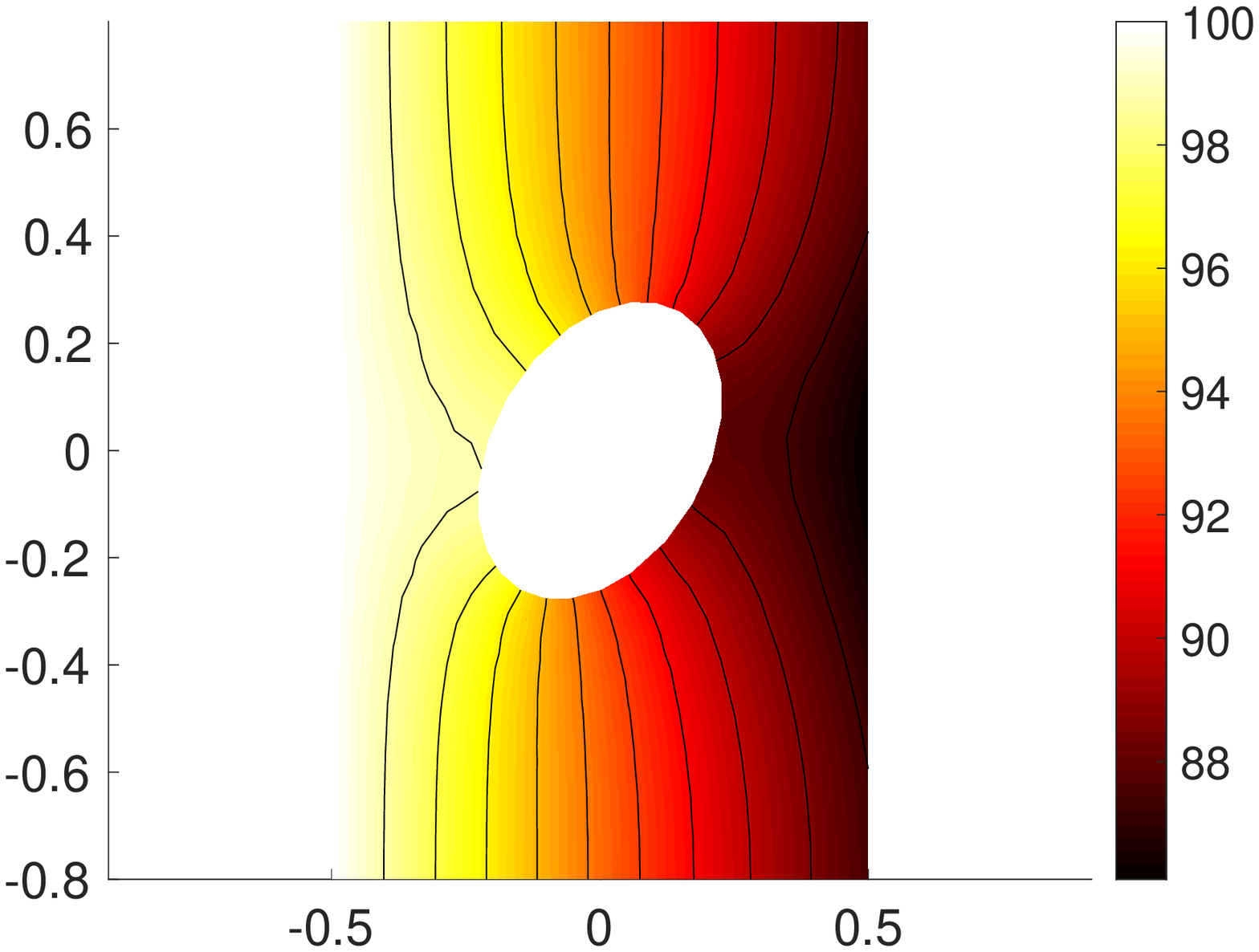}
			\caption{\textit{Conductor}}
		\end{subfigure} 
		&
		\begin{subfigure}[t]{0.47\textwidth}
			\centering
			\includegraphics[width=\textwidth]{./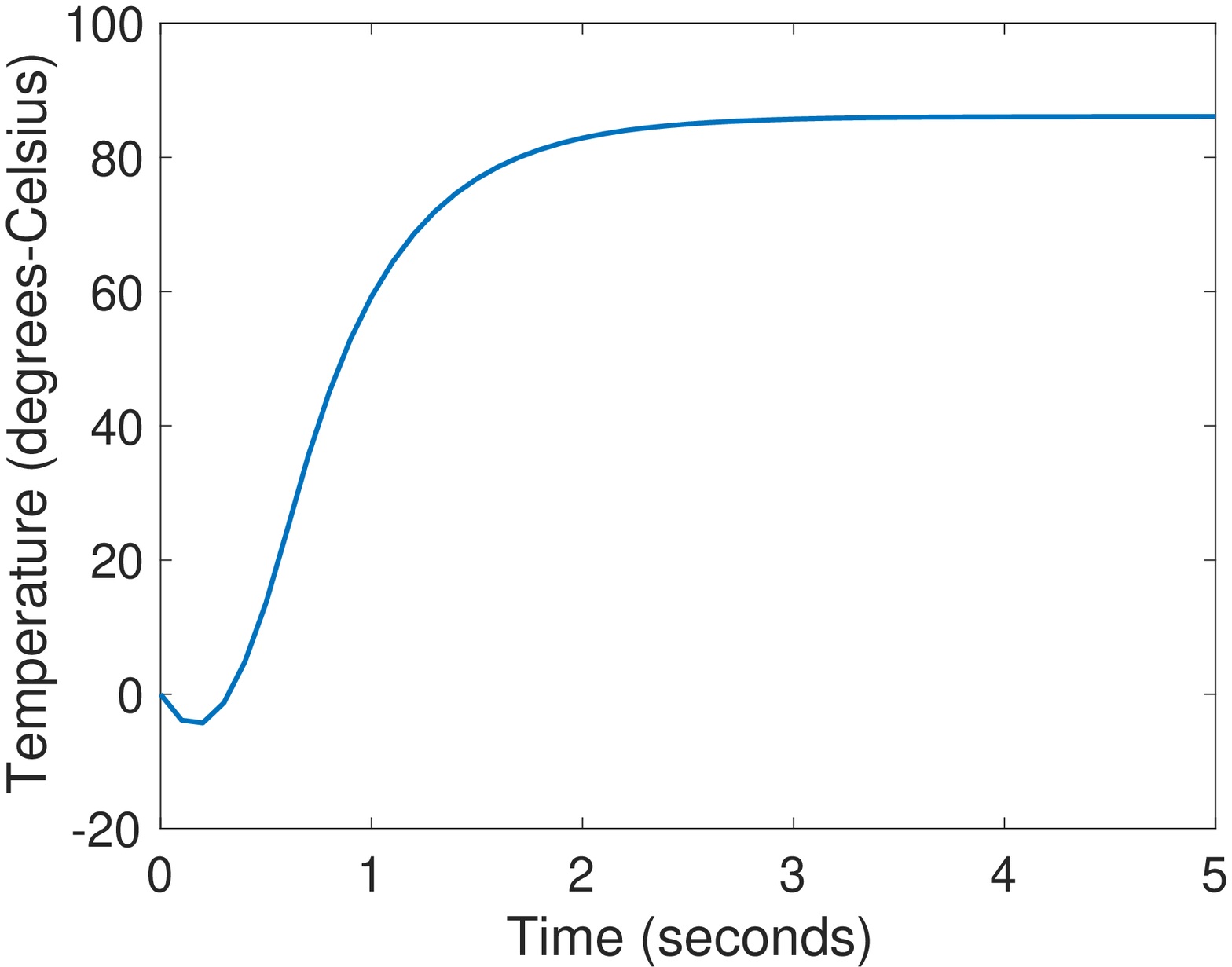}
			\caption{\textit{Heat Response Curve}}
		\end{subfigure}
	\end{tabular}
	\vspace{-0.1in}	
	\caption{\small The thermal conductor with one transient heat solution (a), and the heat responsive curve on the right edge (b). The white triangles in (a) are the finite elements used to discretize the conductor to compute the solution.} 	
	\label{fig:conductor}
\end{figure*}

\cmt{
\begin{figure}[htbp]
	\centering
	\includegraphics[width=0.8\textwidth]{./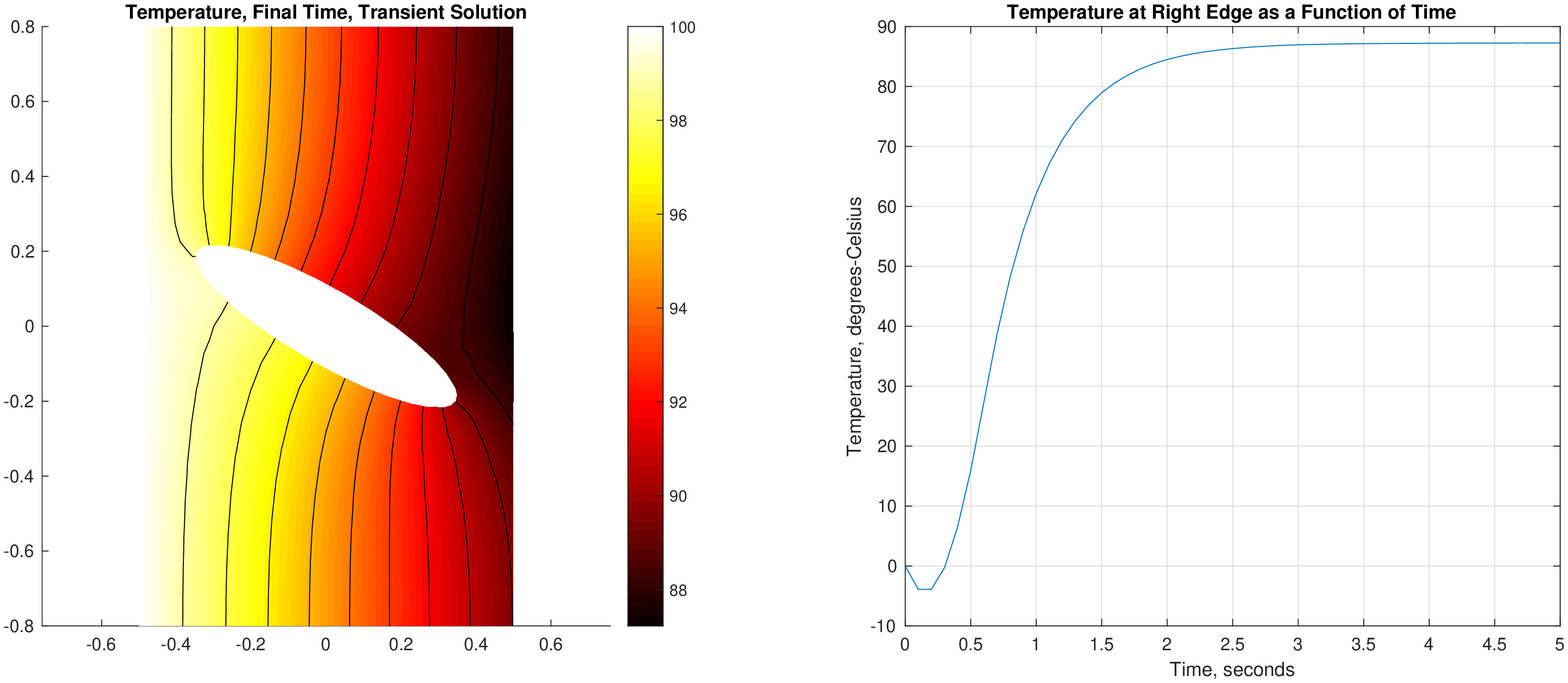}
	\caption{\small The thermal conductor with one transient heat solution (a), and the heat responsive curve on the right edge (b). }
	\label{fig:conductor}
\end{figure}
}

\section{Details of Stochastic Variational Learning}
We develop a stochastic variational learning algorithm to jointly estimate the posterior of  $\Wcal = \{\w_m\}$ --- the NN weights in the output layer in each fidelity,  and the hyperparameters, including all the other NN weights $\Theta = \{\btheta_m\}$ and noise variance $\s = [\sigma_1^2,\ldots, \sigma_M^2]^\top$. 
To this end, we assume $q(\Wcal) = \prod_{m=1}^M q(\w_m)$ where each $q(\w_m) = \N(\w_m|\bmu_m, \bSigma_m)$. We parameterize $\bSigma_m$ with its Cholesky decomposition to ensure the positive definiteness, $\bSigma_m = \L_m \L_m^\top$ where $\L_m$ is a lower triangular matrix. We then construct a variational model evidence lower bound (ELBO) from the joint probability of our model (see (4) of the main paper), 
\begin{align}
 &\Lcal\big(q(\Wcal), \Theta, \s\big) = \EE_{q}\left[\frac{ \log(p(\Wcal, \Ycal|\Xcal, \Theta, \s)}{q(\Wcal)}\right] \notag \\
 &=-\sum_{m=1}^M \text{KL}\big(q(\w_m)\| p(\w_m)\big) + \sum_{m=1}^M \sum_{n=1}^{N_m} \EE_q\big[\log\big(\N(y_{nm}|f_m(\x_{nm}), \sigma_m^2)\big)\big], \label{eq:elbo}
\end{align} 
where $p(\w_m) = \N(\w_m|\0, \I)$ and $\text{KL}(\cdot \| \cdot)$ is the Kullback Leibler divergence.  We maximize $\Lcal$ to estimate $q(\Wcal)$, $\Theta$ and $\s$ jointly. However, since the NN outputs $f_m(\cdot)$ in each fidelity are coupled in a highly nonlinear way (see (3) of the main paper), the expectation terms in $\Lcal$ is analytical intractable. To address this issue, we apply stochastic optimization. Specifically, we use the reparameterization trick ~\citep{kingma2013auto}  and for each $\w_m$ generate parameterized samples from their variational posterior, $\widehat{\w}_m = \bmu_m + \L_m \bepi$ where 
$\bepi \sim \N(\cdot | \0, \I)$. We then substitute each sample $\widehat{\w}_m$ for $\w_m$ in computing all $\log\big(\N(y_{nm}|f_m(\x_{nm}), \sigma_m^2)\big)$ in \eqref{eq:elbo} and remove the expectation in front of them. We therefore obtain $\widehat{\Lcal}$, an unbiased estimate of ELBO, which is analytically tractable. Next, we compute $\nabla  \widehat{\Lcal}$, which is an unbiased estimate of the $\nabla  \Lcal$ and hence can be used to maximize $\Lcal$.  We can use any stochastic optimization algorithm.

\section{Proof of Lemma 4.1}\label{sect:proof}
\begin{lem}
	As long as the conditional posterior variance  $\gamma(f_{m-1}, \x)>0$, the posterior variance $\eta_{m}(\x)$,  computed based on the quadrature in (7) of the main paper, is positive.\label{lem:lem-1}
\end{lem}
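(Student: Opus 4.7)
The plan is to recognize that the Gauss--Hermite quadrature weights $\{g_k\}$ for approximating an expectation under a Gaussian are all strictly positive and (after the standard normalization absorbed into $\{g_k\}$) sum to one, so the quadrature rule is equivalent to taking an expectation under a discrete probability measure $\tilde{p}$ supported on the nodes $\{t_k\}$. This reduces the claim to a law-of-total-variance decomposition under $\tilde{p}$.

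Concretely, I would write the approximation from \eqref{eq:quad} as
\begin{align*}
\eta_m(\x) \;\approx\; \sum_k g_k\bigl[\gamma(t_k,\x) + u(t_k,\x)^2\bigr] - \Bigl(\sum_k g_k\, u(t_k,\x)\Bigr)^{\!2}.
\end{align*}
Using $\sum_k g_k = 1$, regroup the right-hand side as
\begin{align*}
\eta_m(\x) \;\approx\; \underbrace{\sum_k g_k\,\gamma(t_k,\x)}_{\text{(I)}} \;+\; \underbrace{\sum_k g_k\, u(t_k,\x)^2 - \Bigl(\sum_k g_k\, u(t_k,\x)\Bigr)^{\!2}}_{\text{(II)}}.
\end{align*}
Term (I) is a convex combination of the conditional variances $\gamma(t_k,\x)$, each of which is strictly positive by hypothesis, so (I) is strictly positive. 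Term (II) is the variance of the random variable $u(\cdot,\x)$ under the discrete probability distribution that assigns mass $g_k$ to $t_k$; equivalently it equals $\mathrm{Var}_{\tilde{p}}[u(f_{m-1},\x)] \geq 0$ by Jensen's inequality applied to the convex function $z \mapsto z^2$. Adding a strictly positive quantity to a non-negative one yields $\eta_m(\x) > 0$, completing the proof.

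The only genuine obstacle is the starting observation that the Gauss--Hermite weights are positive and that, after absorbing the $\pi^{-1/2}$ normalizer and the change-of-variable Jacobian used to integrate against $\N(f_{m-1}\mid \alpha_{m-1}(\x),\eta_{m-1}(\x))$, they sum to one; this is a standard fact about Gauss--Hermite quadrature that I would state and cite rather than reprove. Everything else is Jensen's inequality plus linearity, and the same argument will carry over verbatim to establish non-negativity of the analogous variance $Z_2/Z - Z_1^2/Z^2$ arising in \eqref{eq:mm2}, as promised in the paper.
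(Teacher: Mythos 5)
Your proof is correct and follows essentially the same route as the paper's: both split the quadrature-based variance into a strictly positive convex combination $\sum_k g_k\,\gamma(t_k,\x)$ plus the non-negative weighted variance $\sum_k g_k\,u_k^2 - (\sum_k g_k u_k)^2$, relying on the positivity and unit sum of the Gauss--Hermite weights. The only cosmetic difference is that you cite Jensen's inequality for the second term where the paper writes it out explicitly as $\sum_k g_k (u_k - \bar{u})^2 \ge 0$.
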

\begin{proof}
	First, for brevity, we denote $u(t_k, \x)$ and $\gamma(t_k, \x)$ in (7) of the main paper by $u_k$ and $\gamma_k$, respectively. Then from the quadrature results, we compute the variance 
	\[
	\text{Var}(f_m|\Dcal) = \sum_k g_k \gamma_k + \sum_k g_k u_k^2 - (\sum_k g_k u_k)^2.
	\]
	Since $\gamma_k>0$, the first summation $\sum_k g_k \gamma_k>0$. Note that the quadrature weights have all $g_k>0$ and $\sum_k g_k = 1$. We define $\bar{u} = \sum_k g_k u_k$. Next, we derive that
	\begin{align}
	 &\sum_k g_k u_k^2 - (\sum_k g_k u_k)^2 = \sum_k g_k u_k^2 - \bar{u}^2 \notag \\
	 &=\sum_k g_k u_k^2  +  \bar{u}^2 - 2\bar{u}^2 \notag \\
	 &=\sum_k g_k u_k^2 + \sum_k g_k \bar{u}^2 - 2\bar{u}^2 \notag \\
	 &=\sum_k g_k u_k^2 + \sum_k g_k \bar{u}^2 - 2 \sum_k g_k u_k \bar{u} \notag \\
	 &=\sum_k g_k (u_k^2 +  \bar{u}^2 - 2  u_k \bar{u}) \notag \\
	 &=\sum_k g_k (u_k - \bar{u})^2 \ge 0. \label{eq:proof-lem}
	\end{align}
	Therefore, $\text{Var}(f_m|\Dcal)>0$.
\end{proof}

\section{Proof of Nonnegative Variance in (12) of the Main Paper}
We show the variance in (12) of the main paper, computed by quadrature, is non-negative. The proof is very similar to that of Lemma 4.1 (Section \ref{sect:proof}). We denote the quadrature weights and nodes by $\{g_k\}$ and $\{t_k\}$. Then we have  
\begin{align}
Z = \sum_k g_k R(t_k),\;\; Z_1 = \sum_k g_k t_k R(t_k), \;\;\; Z_2 = \sum_k g_k t_k^2 R(t_k).
\end{align}
Therefore, 
\begin{align}
\frac{Z_1}{Z} &= \sum_k t_k \frac{g_k R(t_k)}{\sum_j g_j R(t_j)} = \sum_k t_k \nu_k, \notag \\
\frac{Z_2}{Z} &= \sum_k t_k^2 \frac{g_k R(t_k)}{\sum_j g_j R(t_j)} = \sum_k t_k^2 \nu_k
\end{align}
where $\nu_k =\frac{g_k R(t_k)}{\sum_j g_j R(t_j)}>0$ and $\sum_k \nu_k = 1$. Following the same derivation as in \eqref{eq:proof-lem}, we can immediately show that the variance
\[
Z_2/Z - Z_1^2/Z^2  = \sum_k \nu_k (t_k - \bar{t})^2\ge 0
\]
where $\bar{t} = Z_1/Z = \sum_k t_k \nu_k$.
\bibliographystyle{apalike}










\end{document}